\def\abstract{\begin{center}\vspace{-0.8em}\large\bf{Abstract}\end{center}\quotation\small}
\titleformat*{\section}{\large\bfseries}
\titleformat*{\subsection}{\normalsize\bfseries}
\newtheorem{theorem}{Theorem}
\newtheorem{lemma}{Lemma}
\newtheorem{corollary}{Corollary}
\newtheorem{remark}{Remark}
\newcommand*{\Ex}[2]{\mathop{\mathbb{E}}_{#1}{\left[#2\right]}}
\newcommand*{\KL}[2]{D_{KL}\left[#1 || #2\right]}
\newcommand{\cL}{\mathcal{L}}
\newcommand{\cD}{\mathcal{D}}
\newcommand{\cR}{\mathcal{R}}
\newcommand{\cF}{\mathcal{F}}
\newcommand{\cX}{\mathcal{X}}
\newcommand{\cY}{\mathcal{Y}}
\newcommand{\argmax}{\mathop{\rm arg~max}\limits}
\newcommand{\T}{^{\rm T}}
\algnewcommand{\IIf}[1]{\State\algorithmicif\ #1\ \algorithmicthen}
\algnewcommand{\EndIIf}{\unskip\ \algorithmicend\ \algorithmicif}
\begin{document}

\title{Stopping criterion for active learning based on deterministic generalization bounds}

\author[1]{Hideaki Ishibashi}
\author[2]{Hideitsu Hino }

\affil[1]{Kyushu Institute of Technology}
\affil[2]{The Institute of Statistical Mathematics/RIKEN AIP}
\date{}

\maketitle

\thispagestyle{fancy}

\begin{abstract}
Active learning is a framework in which the learning machine can select the samples to be used for training. This technique is promising, particularly when the cost of data acquisition and labeling is high. In active learning, determining the timing at which learning should be stopped is a critical issue. In this study, we propose a criterion for automatically stopping active learning. The proposed stopping criterion is based on the difference in the expected generalization errors and hypothesis testing. We derive a novel upper bound for the difference in expected generalization errors before and after obtaining a new training datum based on PAC-Bayesian theory. Unlike ordinary PAC-Bayesian bounds, though, the proposed bound is deterministic; hence, there is no uncontrollable trade-off between the confidence and tightness of the inequality. We combine the upper bound with a statistical test to derive a stopping criterion for active learning. We demonstrate the effectiveness of the proposed method via experiments with both artificial and real datasets.
\end{abstract}

\section{Introduction}
In supervised learning problems, increasing the number of training samples can improve prediction performance. However, to train a predictor, annotated datasets are required, and annotation frequently requires the knowledge of experts or the conduction of experiments with a high cost, such as large-scale experiments or long-term experiments, for example, agricultural examinations. Active learning (AL)~\citep{Settles2009} is a framework in which learners can select data that improve the prediction accuracy. Although various methods have been proposed for selecting new data, only a few studies have reported on the criteria for stopping learning~\citep{Vlachos2008,Ertekin2007,Paisley2010,Krause2007}, which is a critical aspect to make active learning practical. Furthermore, most existing criteria depend on a specific learning tasks, and parameters such as the threshold need to be appropriately defined.

The purpose of this study is to develop a versatile stopping criterion for active learning, that is, a criterion independent of task or loss function. Realization of this goal faces two challenges. The first issue is determining the measure to use for the stopping criterion. The simplest approach is to evaluate the convergence of learning by monitoring the generalization error of the prediction model using the test dataset. However, active learning is often applied in circumstances in which obtaining a sufficient amount of test data is unreasonable. Therefore, it is necessary to evaluate the convergence of learning without using the test dataset. The second issue is the development of a concrete algorithm to determine the stop timing of learning. The simplest approach for this is to set a threshold and stop learning when a value defined based on a certain measure exceeds the threshold.
However, in general, the appropriate threshold may not be known in advance.

In this study, the generalization error is evaluated without using the test data by employing the upper bound of the difference in the expected generalization errors based on the PAC-Bayesian framework~\citep{McAllester1999,McAllester2003,Langford2005,Catoni2007}. Furthermore, the sequence of the difference in the expected generalization errors is regarded as time series data, and the stop timing of learning is determined automatically via a runs test~\citep{Wald1940}.

The major contributions of our work are as follows:

A versatile stopping criterion for active learning is proposed. The proposed method is applicable to arbitrary cost functions and can be applied to both classification and regression tasks. Moreover, the proposed criterion can evaluate performance of a model at low calculation cost. In the stopping criterion of AL, we need to consider a trade-off between the labeling cost, computational cost for both learning the model and determining the criterion as well as the predictive performance of the model. The proposed method assume the cost of labeling is dominant, and the predictive performance of the model is of importance. The computational cost for evaluating the stopping criterion is of the constant order, hence it does not increase the over all computational cost.

Theoretical guarantees for the criterion are derived based on the PAC-Bayesian theory. The proposed criterion is the upper bound of the optimal value in terms of the PAC-Bayesian framework.
It is notable that even though the proposed criterion is derived from the PAC-Bayesian theory, it is a deterministic bound; therefore, there is no trade-off between the confidence and tightness of the bound. We combine the proposed bound with statistical test to realize a statistically sound criterion for stopping active learning.

\section{Active learning and its stopping criteria}
Let $x \in \cX$ and $y \in \cY$ be the input variable and the corresponding output variable, respectively, in certain domains $\mathcal{X}$ and $\mathcal{Y}$. Supervised learning considers the problem of estimating the predictor $f: x \mapsto \mathbb{E}[y|x]$. For problems with high annotation cost or acquisition cost, active learning~\citep{Settles2009} is a practical method for constructing a useful prediction model with the minimum number of annotations or labeling for the output variable. Specifically, active learning yields a new datum by repeating two processes: (i) estimating the predictor from the acquired training data and (ii) determining the new input datum $x^{\ast}$ to maximize the {\it{acquisition function}} $a(x|f)$ based on the current predictor:
\begin{equation}
x^\ast = \argmax_{x \in \cX}a(x | f).
\label{acquisition_function}
\end{equation}

In practical application of active learning algorithms, the timing to terminate learning is decided based on a predetermined budget, which is called the ``fixed-budget" approach. However, there are two problems in this standard approach. (1) Even within the fixed budget, it is possible that the learner has enough training data, and the fixed-budget approach oversamples in this situation, which misses possible saving of budget. In addition, if we knew that the learner has much room for improvement when we have reached the budget limit, we can claim that we should spend more of the budget. (2) Active learning can be used for outlier removal~\citep{Kobayashi2012,Cohan2018}, but the fixed-budget approach cannot rule out sampling outliers. Developing a stopping criterion for active learning is in these sense important.

The existing stopping criteria for active learning can be classified into accuracy- and uncertainty-based approaches. Methods pertaining to the accuracy-based approach evaluate predictive errors. A typical method is to evaluate the predictive error by using selected unlabeled data or pool data~\citep{Zhu2007,Zhu2008,Zhu2008b,Laws2008}. Another popular method is based on the stability of agreement of multiple predictors~\citep{Bloodgood2009,Bloodgood2013,Altschuler2019,Olsson2009}.
Methods pertaining to the uncertainty-based approach evaluate the uncertainty of prediction by using the pooled data. The margin of support vector machine (SVM) classifier is used for measuring the uncertainty~\citep{Schohn2000,Vlachos2008}. Criteria based on the convergence of the margin or other related quantities evaluated by using the pooled dataset are proposed in ~\citep{Laws2008,Krause2007}. However, most of those criteria depend on the learning models, acquisition function, and problem settings such as the use of classification or regression. 

Apart from the studies on active learning, some other studies have considered the optimal stopping timing for learning algorithms. In the nonparametric regression and neural networks literature, {\it{early stopping}}~\citep{Prechelt2012,Raskutti2011} and its variants~\citep{Wang2018,Raskutti2011} have been widely used in practice to reduce computational time and overfitting. In the framework of Bayesian optimization, several heuristics have been devised~\citep{Lorenz2015,Desautels2014}, although a theoretically supported method with practical utility is yet to be developed. In the multi-armed bandit literature, the best arm identification has been considered as a problem of finding the best model in the minimum number of trials~\citep{Kaufmann2016,Even-Dar2006,Audibert2010,Aziz2018}. Determining an appropriate stopping timing is also a critical issue when running Markov chain Monte Carlo (MCMC) algorithms, in which the sequential fixed-width confidence interval is one of the most popular methods~\citep{chow1965,Gong2016,Galin2006}. However, the methods developed in the studies related to MCMC algorithms focus on the convergence to a stationary distribution, and thus they cannot be directly applied in active learning because the aim of active learning is not to obtain a certain probability model but to train an accurate predictor.

In this work, we focus on a stopping criterion that monitors the difference in the generalization errors before and after a new training sample is obtained. If sufficient test data are available, it is easy to evaluate the difference in the generalization errors. However, this is not the case when active learning is involved. To alleviate this problem, we adopt the PAC-Bayesian framework~\citep{McAllester1999,McAllester2003,Langford2005,Catoni2007,Germain2016}, in which the generalization error is bounded by using only the training dataset.

\section{Evaluation of difference between generalization errors}
Let $S=(X,Y)=\{(x_i,y_i)\}^t_{i=1}, \; (x_i,y_i) \in \cX \times \cY$ be the observed dataset. We assume that $(X,Y)$ is generated by a probability distribution $\cD$. Let $l:\cF\times\cX\times\cY \rightarrow [a,b]$ be a loss function, where $\cF$ is a set of predictors and $0 \leq a < b < \infty$. In the Bayesian framework, we assume that the predictor $f$ is a function-valued random variable and consider its prior and posterior distributions.
The expected risk $\cL_\cD(f)$ and empirical risk $\cL_S(f)$ can be defined as follows:
\begin{equation}
    \cL_\cD(f) = \Ex{\cD}{l(f,x,y)}, \quad
    \cL_S(f) = \frac{1}{t}\sum^t_{i=1}l(f,x_i,y_i). \nonumber
\end{equation}
PAC-Bayesian theory binds the expected generalization error to any posterior $q(f)$ of predictor $f$~\citep{McAllester2003,Catoni2007}, that is, $\Ex{q(f)}{\cL_\cD(f)}$. So far, various upper bounds in classification have been proposed~\citep{McAllester1999,McAllester2003,Langford2005,Catoni2007}. Recently, the upper bound of the expected generalization error in the regression problem has also been proposed~\citep{Alquier2016,Germain2016}.

From the viewpoint of active learning, the greatest advantage of the PAC bound is its universality because it is derived without assuming any specific loss function. Based on the PAC-Bayesian approach, not only different learning models but also both classification and regression problems can be addressed using the unified framework.

\subsection{Upper bound for difference between generalization errors}
Let $q(f|S)$ be the posterior distribution of predictor $f \in \cF$ given a dataset $S=(X,Y)$. The posterior is calculated by using Bayes' theorem as follows: Assuming that the loss function is the negative log-likelihood~\citep{Banerjee2006}, that is,
$l(f,x,y) = -\log{p(y|f)}$, we have $p(y|f,x) = e^{-l(f,x,y)}.$
If a vector of function value $f$ for input $X$ is denoted by $\mathbf{f}_X$, the posterior can be obtained using
\begin{align}
    q(\mathbf{f}_X|S) &= p(Y|\mathbf{f}_X,X)p(\mathbf{f}_X)/ p(Y|X) \\
    &= e^{-t\cL_S(f)}p(\mathbf{f}_X)/p(Y|X),
\end{align}
where $p(\mathbf{f}_X)$ is a prior.
Let $X_\ast$ and $\mathbf{f}_{X_\ast}$ be the test input and its corresponding vector of the function value, respectively. Then, the predictive distribution $q(\mathbf{f}_{X_\ast}|S)$ is given by $q(\mathbf{f}_{X_\ast}|S) = \int{p(\mathbf{f}_{X_\ast}|\mathbf{f}_X)q(\mathbf{f}_X|S)d\mathbf{f}_X}$, where $p(\mathbf{f}_{X_\ast}|\mathbf{f}_X)$ is the conditional probability w.r.t. the prior distribution $p(\mathbf{f}_{X_\ast},\mathbf{f}_X)$.
The difference between the expected generalization error w.r.t $q(f|S)$ and that w.r.t $q(f|S')$ is denoted by $\cR(q(f|S),q(f|S'))$, that is,
\begin{equation}
    \cR(q(f|S),q(f|S')) = \Ex{q(f|S)}{\cL_\cD(f)} - \Ex{q(f|S')}{\cL_\cD(f)}. \nonumber
\end{equation}
Note that the sample sizes of $S$ and $S'$ can be different.

The quantity $\cR(q(f|S_t),q(f|S_{t+1}))$ represents the reduction of the expected generalization error by the data acquisition. Substituting $p(f)$ to $q(f|S_0)$, we have $\cR(p(f),q(f|S_t)) = \sum^t_{i=1}\cR(q(f|S_{i-1}),q(f|S_i))$. This represents the cumulative reduction of the expected generalization error by adding $t$ samples. Because $\Ex{p(f)}{\cL_\cD(f)}$ is constant for added samples, we can assume it is a constant $d$. Then, $\cR(p(f),q(f|S_t))=d-\Ex{q(f|S_t)}{\cL_\cD(f)}$; hence, the convergence of $\cR(p(f),q(f|S_t))$ is equivalent to the convergence of $\Ex{q(f|S_t)}{\cL_\cD(f)}$, which is the rationale behind the definition of $\cR$. We evaluate the convergence of $\cR(q(f|S_t),q(f|S_{t+1}))$ in the framework of a statistical test.

The KL divergence between $p(f)$ and $q(f)$ can be defined as $\KL{p(f)}{q(f)}=\Ex{p(f)}{\log{\frac{dp(f)}{dq(f)}}}$.
Then, the following theorem holds:
\begin{theorem}
Let $q(f|S)$ and $q(f|S')$ be the posteriors w.r.t. predictor $f \in \cF$ given $S$ and $S'$. For any measurable function $\cL_\cD(f)$\footnote{With a probability space $(\cF,\Sigma,q(\cdot|S))$ and measurable space $([a,b],T)$, suppose $\cL_\cD:\cF \rightarrow [a,b]$ is a measurable function. Then, $\forall E \in T$, and we have $\cL^{-1}_\cD(E)\in\Sigma$; hence, there exists a probability distribution $p$ such that $p(E) = q(\cL^{-1}_\cD(E)|S)$, where $\cL^{-1}_\cD$ is the inverse correspondence. Now, we have $\Ex{q(f|S)}{e^{\cL_\cD(f)}}=\Ex{p(\cL)}{e^{\cL}}$, and we can use Jensen's inequality in the proofs of Lemma~1 and Theorem~\ref{theorem1}.}, the following inequality holds:
\begin{equation}
    \cR(q(f|S),q(f|S')) \leq \KL{q(f|S)}{q(f|S')} + C,
    \label{upper_bound}
\end{equation}
\label{theorem1}
where $C= 2\log{\frac{e^a+e^b}{2}}-a-b$. We denote the upper bound by $\tilde{\cR}(q(f|S),q(f|S'))$.
\end{theorem}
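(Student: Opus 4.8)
The plan is to rewrite $\cR(q(f|S),q(f|S'))$ so that the two expectations, which are taken under \emph{different} posteriors, are coupled by a single change-of-measure step that produces the $\KL$ term, and then to control the remaining single-posterior quantity by Jensen's inequality together with the convexity of the exponential on the bounded range $[a,b]$.

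\emph{Step 1 (change of measure).} The first move is the standard PAC-Bayesian change-of-measure (Donsker--Varadhan) inequality, whose one-line proof is Jensen's inequality applied to $\log$: for any measurable $\phi$,
\begin{equation}
\Ex{q(f|S)}{\phi(f)} \le \KL{q(f|S)}{q(f|S')} + \log\Ex{q(f|S')}{e^{\phi(f)}}. \nonumber
\end{equation}
Taking $\phi=\cL_\cD$ and subtracting $\Ex{q(f|S')}{\cL_\cD(f)}$ from both sides yields
\begin{equation}
\cR(q(f|S),q(f|S')) \le \KL{q(f|S)}{q(f|S')} + \Big( \log\Ex{q(f|S')}{e^{\cL_\cD(f)}} - \Ex{q(f|S')}{\cL_\cD(f)} \Big). \nonumber
\end{equation}
(If $q(f|S)$ is not absolutely continuous w.r.t.\ $q(f|S')$ the $\KL$ term is $+\infty$ and there is nothing to prove; the pushforward construction in the footnote is exactly what makes $\Ex{q(f|S')}{e^{\cL_\cD(f)}}$ a bona fide expectation of a real random variable, so that Jensen applies.) It then remains to show the bracketed residual is at most $C$.

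\emph{Step 2 (symmetrize the residual).} Apply Jensen to the convex map $t\mapsto e^{-t}$ under $q(f|S')$: $-\Ex{q(f|S')}{\cL_\cD(f)} \le \log\Ex{q(f|S')}{e^{-\cL_\cD(f)}}$. Hence the residual is at most $\log\!\big( \Ex{q(f|S')}{e^{\cL_\cD(f)}}\cdot\Ex{q(f|S')}{e^{-\cL_\cD(f)}} \big)$, so it suffices to bound this product of ``$\pm$ moment-generating'' factors by $\big(\tfrac{e^a+e^b}{2}\big)^2e^{-(a+b)}$. \emph{Step 3 (convexity on $[a,b]$).} Since $\cL_\cD(f)\in[a,b]$, put $\lambda(f)=\frac{\cL_\cD(f)-a}{b-a}\in[0,1]$; convexity of $\exp$ and of $t\mapsto e^{-t}$ give $e^{\cL_\cD(f)}\le(1-\lambda(f))e^a+\lambda(f)e^b$ and $e^{-\cL_\cD(f)}\le(1-\lambda(f))e^{-a}+\lambda(f)e^{-b}$. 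Taking $\Ex{q(f|S')}{\cdot}$ and writing $s=\Ex{q(f|S')}{\lambda(f)}\in[0,1]$,
\begin{equation}
\Ex{q(f|S')}{e^{\cL_\cD(f)}}\cdot\Ex{q(f|S')}{e^{-\cL_\cD(f)}} \le \big[(1-s)e^a+se^b\big]\big[(1-s)e^{-a}+se^{-b}\big] = 1 + 2s(1-s)\big(\cosh(b-a)-1\big). \nonumber
\end{equation}
Using $s(1-s)\le\frac14$ this is at most $\frac{1+\cosh(b-a)}{2}=\cosh^2\!\frac{b-a}{2}=\big(\frac{e^a+e^b}{2}\big)^2e^{-(a+b)}$; taking logarithms turns the residual bound into $2\log\frac{e^a+e^b}{2}-a-b=C$, which together with Step~1 is the claim.

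The main obstacle is Step~3: one must resist bounding $\log\Ex{q(f|S')}{e^{\cL_\cD(f)}}$ and $-\Ex{q(f|S')}{\cL_\cD(f)}$ separately and instead combine them into a single product, whose worst case (after the convexity relaxation) is the two-point law on $\{a,b\}$ with equal weights, i.e.\ $s=\tfrac12$, where $s(1-s)$ peaks; one then has to recognize the resulting constant as $\big(\tfrac{e^a+e^b}{2}\big)^2e^{-(a+b)}$. Steps~1--2 are routine PAC-Bayes manipulations. It is worth noting that this $C$ is not tight---the exact supremum of the residual over the posterior $q(f|S')$ has a messier closed form---but the product-and-convexity route is what yields the clean deterministic constant stated in the theorem.
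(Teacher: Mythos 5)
Your proof is correct, and its first half coincides exactly with the paper's: both apply the Donsker--Varadhan change-of-measure inequality with $\phi=\cL_\cD$ to peel off $\KL{q(f|S)}{q(f|S')}$ and reduce the claim to showing that the residual $\log\Ex{q(f|S')}{e^{\cL_\cD(f)}}-\Ex{q(f|S')}{\cL_\cD(f)}$ is at most $C$. Where you genuinely diverge is in how that residual is controlled. The paper reads it as the Jensen gap of the concave function $\log$ evaluated at the random variable $e^{\cL_\cD(f)}\in[e^a,e^b]$ and invokes a cited converse-Jensen bound (Simi\'c, 2008), namely $J(p,X)\le 2h(\tfrac{a+b}{2})-h(a)-h(b)$, which with $h=\log$ on the interval $[e^a,e^b]$ yields $C$ in one line. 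You instead symmetrize via a second application of Jensen, $-\Ex{q(f|S')}{\cL_\cD(f)}\le\log\Ex{q(f|S')}{e^{-\cL_\cD(f)}}$, and then bound the product $\Ex{q(f|S')}{e^{\cL_\cD(f)}}\cdot\Ex{q(f|S')}{e^{-\cL_\cD(f)}}$ by the secant-line upper bounds for the convex maps $t\mapsto e^{t}$ and $t\mapsto e^{-t}$ on $[a,b]$ together with $s(1-s)\le\tfrac14$; the algebra $1+2s(1-s)(\cosh(b-a)-1)\le\cosh^2\tfrac{b-a}{2}=\bigl(\tfrac{e^a+e^b}{2}\bigr)^2e^{-(a+b)}$ checks out and recovers exactly the paper's constant. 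The trade-off is that your argument is fully self-contained and elementary where the paper leans on an external lemma, while the paper's route is shorter and exposes the structure of the residual (a converse Jensen inequality) more directly; since both relaxations are saturated only by mass pushed to the endpoints, neither gives a better constant than the other, and your closing remark that $C$ is not the exact supremum of the residual applies equally to the paper's bound.
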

If the two posterior probabilities $q(f|S)$ and $q(f|S')$ share a common prior distribution, the following corollary holds:
\begin{corollary}
Let $q(f|S)$ and $q(f|S')$ be the posteriors of $f$ given $S$ and $S'$, respectively. We assume that a prior of $q(f|S)$ and that of $q(f|S')$ are the same probability distribution. Then, for any measurable function $\cL_\cD(f)$, the following inequality holds:
\begin{equation}
    \cR(q(f|S),q(f|S')) \leq \KL{q(\mathbf{f}_{X_+}|S))}{q(\mathbf{f}_{X_+}|S'))} + C. 
\end{equation}
Here, $\mathbf{f}_{X_+}$ is a random variable vector of the function values for inputs $X_+:=X\cup X'$, and the posterior distributions of $\mathbf{f}_{X_+}$ given $S$ and $S'$ are denoted by $q(\mathbf{f}_{X_+}|S)$ and $q(\mathbf{f}_{X_+}|S')$, respectively.
\label{corollary1}
\end{corollary}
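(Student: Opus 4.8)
The plan is to derive the corollary from Theorem~\ref{theorem1} by rewriting the process-level divergence $\KL{q(f|S)}{q(f|S')}$ as the finite-dimensional divergence over $\mathbf{f}_{X_+}$. The key observation is that, in the Bayesian construction of Section~3.1, the data enter the posterior only through the function values at the training inputs, so once we condition on $\mathbf{f}_{X_+}$ (with $X_+=X\cup X'$) the two posteriors differ only in a factor that the common prior leaves untouched.

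First I would make this precise. Because the loss is the negative log-likelihood, the likelihood factor $p(Y|\mathbf{f}_X,X)=e^{-t\cL_S(f)}$ depends on $f$ only through $\mathbf{f}_X$, and $\mathbf{f}_X$ is part of $\mathbf{f}_{X_+}$ since $X\subseteq X_+$. Writing the process as $f=(\mathbf{f}_{X_+},f_{\setminus X_+})$, where $f_{\setminus X_+}$ collects the remaining coordinates, Bayes' theorem yields the disintegration
\begin{equation*}
q(f|S)=q(\mathbf{f}_{X_+}|S)\,p(f_{\setminus X_+}|\mathbf{f}_{X_+}),
\end{equation*}
and, by the same argument applied to $S'$ (whose likelihood depends on $f$ only through $\mathbf{f}_{X'}\subseteq\mathbf{f}_{X_+}$),
\begin{equation*}
q(f|S')=q(\mathbf{f}_{X_+}|S')\,p(f_{\setminus X_+}|\mathbf{f}_{X_+}).
\end{equation*}
Crucially, the conditional factor $p(f_{\setminus X_+}|\mathbf{f}_{X_+})$ is the same in both expressions, since it is determined solely by the common prior and is unaffected by the observations.

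Next I would apply the chain rule for the KL divergence to this factorization:
\begin{align*}
\KL{q(f|S)}{q(f|S')} &= \KL{q(\mathbf{f}_{X_+}|S)}{q(\mathbf{f}_{X_+}|S')} \\
&\quad{}+\Ex{q(\mathbf{f}_{X_+}|S)}{\KL{p(f_{\setminus X_+}|\mathbf{f}_{X_+})}{p(f_{\setminus X_+}|\mathbf{f}_{X_+})}}.
\end{align*}
The second term vanishes because the two conditionals coincide, hence $\KL{q(f|S)}{q(f|S')}=\KL{q(\mathbf{f}_{X_+}|S)}{q(\mathbf{f}_{X_+}|S')}$. Substituting this identity into the bound of Theorem~\ref{theorem1} gives the claimed inequality.

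The main obstacle is the measure-theoretic justification of the disintegration and the chain rule when $f$ is a function-valued (hence possibly infinite-dimensional) random variable: one needs a regular conditional distribution $p(f_{\setminus X_+}|\mathbf{f}_{X_+})$ to exist and the likelihoods to be measurable with respect to the $\sigma$-algebra generated by the finitely many evaluations $\mathbf{f}_X,\mathbf{f}_{X'}$, both of which hold for the priors used in practice (e.g.\ Gaussian processes). A cleaner route that avoids these subtleties is to run the same computation on $X_+$ augmented by an arbitrary finite set of test inputs $X_\ast$: there $q(\mathbf{f}_{X_\ast}|\mathbf{f}_{X_+},S)=p(\mathbf{f}_{X_\ast}|\mathbf{f}_{X_+})=q(\mathbf{f}_{X_\ast}|\mathbf{f}_{X_+},S')$, the chain rule is elementary, and taking the supremum over $X_\ast$ recovers the process-level identity while also showing that enlarging the conditioning set beyond $X_+$ cannot decrease the bound.
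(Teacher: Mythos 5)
Your proposal is correct and follows essentially the same route as the paper: the paper proves a separate lemma (Lemma~3) establishing $\KL{q(f|S)}{q(f|S')}=\KL{q(\mathbf{f}_{X_+}|S)}{q(\mathbf{f}_{X_+}|S')}$ via exactly your chain-rule decomposition, with the same key step that $q(\mathbf{f}_{X_\ast}|\mathbf{f}_{X_+},S)=p(\mathbf{f}_{X_\ast}|\mathbf{f}_{X_+})=q(\mathbf{f}_{X_\ast}|\mathbf{f}_{X_+},S')$ because the likelihood depends on $f$ only through its values on the training inputs, and then combines this with Theorem~\ref{theorem1}. Your closing remarks on the measure-theoretic justification of the disintegration are a welcome refinement that the paper glosses over, but they do not change the substance of the argument.
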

Corollary~\ref{corollary1} implies that $\cR(q(f|S),q(f|S'))$ can be evaluated by the upper bound, which is computable by using the observed data.

\subsection{KL divergence between Gaussian processes}
As a specific example, we consider active learning with the Gaussian process~(GP) as the prediction function. We note that the proposed method for stopping active learning is applicable as long as we can estimate the KL divergence between posterior distributions. In the GP, the loss function is assumed to be the negative log likelihood of the Gaussian distribution, and the prior distribution can be obtained as
\begin{equation}
  \left[
    \begin{array}{c}
      \mathbf{y}  \\
      f(x)
    \end{array}
  \right]
  \sim \mathcal{N}\left(
  \left[
    \begin{array}{c}
      \bm{\mu}  \\
      \mu(x)
\end{array}
  \right]
  ,
  \left[
    \begin{array}{cc}
      \mathbf{K}+\beta^{-1}\mathbf{I} & \mathbf{k}(x) \\
      \mathbf{k}\T(x) & k(x,x)
    \end{array}
  \right]\right). \nonumber
\end{equation}
Let $S_t =\{(x_i,y_i)\}^t_{i=1}$ be the observed dataset of size $t$. Then, the posterior can be defined as
\begin{align}
    f(x) | x, S_t &\sim \mathcal{N}(\mu(x),\sigma(x,x)),
\end{align}
where $\mu(x) = \mathbf{k}\T(x)(\mathbf{K}+\beta^{-1}\mathbf{I})^{-1}\mathbf{y}$, $ \sigma(x,x) = k(x,x) - \mathbf{k}\T(x)(\mathbf{K}+\beta^{-1}\mathbf{I})^{-1}\mathbf{k}(x)$, and $\mathbf{y}=(y_1,y_2,\cdots,y_t)\in \mathbb{R}^{t}$.

Let $q(f|S_t)$ and $q(f|S_{t+1})$ be the posteriors of $f$ given $S_t=\{(x_i,y_i)\}^t_{i=1}$ and $S_{t+1}=\{(x_i,y_i)\}^{t+1}_{i=1}$, respectively.
We assume that the prior of $q(f|S_t)$ is the same as that of $q(f|S_{t+1})$. Then, it is easy to calculate $\cR(q(f|S_t),q(f|S_{t+1}))$ in the case of a GP posterior. Let $\mu_t$ and $\sigma_t$ be the mean and covariance functions of $q(f|S_{t+1})$, respectively.
Then, from Corollary~\ref{corollary1}, the following equality holds:
\begin{align}
    &\tilde{\cR}(q(f|S_t),q(f|S_{t+1})) \nonumber \\ =&\KL{\mathcal{N}(\mathbf{f}_{X_{t+1}}|\mathbf{\mu}_t,\mathbf{\Sigma}_t)}{\mathcal{N}(\mathbf{f}_{X_{t+1}}|\mathbf{\mu}_{t+1},\mathbf{\Sigma}_{t+1})}
    + C \nonumber \\
    =&\frac{1}{2}\beta \sigma_t(x_{t+1},x_{t+1})-\frac{1}{2}\log{|1+\beta \sigma_t(x_{t+1},x_{t+1})|} \nonumber \\
    &+\frac{1}{2}\frac{\beta \sigma_t(x_{t+1},x_{t+1})}{\sigma_t(x_{t+1},x_{t+1})+\beta^{-1}}(y_{t+1}-\mu_t(x_{t+1}))^2+C.
    \label{proposed_upper_bound}
\end{align}
Details of the derivation is described in the supplementary material.

\subsection{Conventional PAC-Bayesian bounds}
In this section, we present the typical PAC-Bayesian bounds derived in the literature. The most famous bound is McAllester's bound~\citep{McAllester1999}, which binds the expectation of the generalization error by the training dataset $S=\{(x_i,y_i)\}^t_{i=1}$ as follows:
\begin{align}
    &\mathop{Pr}_{S \sim D}\left(\forall{q}: \Ex{q(f)}{\cL_\cD(f)}\leq \Ex{q(f)}{\cL_S(f)}\right. \\ \notag
     &+\left.\sqrt{
    \left(\KL{q(f)}{p(f)}+\log{(2\sqrt{t}/\delta)} \right)/2t}\right)\geq 1-\delta,
\end{align}
where $\delta \in (0,1]$ is the confidence parameter, and $p(f)$ and $q(f)$ are the prior and posterior distributions, respectively. McAllester's bound is only applicable to classification problems.

For regression problems, \citet{Alquier2016} derived the following bound:
\begin{align}
    \label{Alquier's bound}
    &\mathop{Pr}_{S \sim D}\left(\forall{q}: \Ex{q(f)}{\cL_\cD(f)}\leq \Ex{q(f)}{\cL_S(f)} \right.\\ \notag
    &\left.+t^{-1} \left(\KL{q(f)}{p(f)}-\log{\delta}\right)+1/2(b-a)^2\right)\geq 1-\delta,
\end{align}
in which the range of loss function $l$ is restricted to $[a, b]$. \citet{Germain2016} proposed another bound that does not have any restrictions on the range of $l$; however, this bound is derived by assuming a specific form of the loss function.

The most notable difference between the existing and proposed bounds is that the proposed bound is a {\it{deterministic}} bound. We can guarantee a gap between the posterior distributions before and after adding a new sample without any confidence parameter. This reliability is a particularly important characteristic when the bound is to be applied to a measure of the stopping criterion.

We propose to determine whether to stop the learning or not by testing the convergence of sequence of $\cR(q(f|S_t),q(f|S_{t+1}))$. Our aim is to develop a reliable criterion for stopping active learning in the framework of a statistical test. Ordinary PAC-Bayesian bounds have parameters $\delta$ (similar to $(\alpha,1-\beta)$, the significance level and power, for a statistical test). Since the confidence parameter appears both inside and outside of the probability function, it is not straightforward to cast the PAC-Bayesian bound in a standard statistical test framework. In contrast, using the proposed novel deterministic bound makes it possible to develop a tractable statistical test, as introduced in the next section.

\section{Convergence test}
\subsection{Wald--Wolfowitz runs test}
The Wald--Wolfowitz runs test~\citep{Wald1940} is a nonparametric test of the randomness hypothesis of a given binary sequence. It was originally proposed as a one-sample test but has been extended to two-sample tests by~\citet{Barton1957}. Usually, the null hypothesis is set to be $H_0:p(E_1,E_2,\cdots,E_T)=\Pi^T_{t=1}p(E_t)$, and under this hypothesis, we assume that the data at time $t$ is $E_t \in \{0,1\}$ and the probability that sequence $(E_1,E_2,\cdots,E_T)$ is generated is $p(E_1,E_2,\cdots,E_T)$. It is known that the power of the runs test is superior to that of the Kolmogorov--Smirnov test when the difference in location between the two sequences is small with a large difference in the variance~\citep{Magel1997}.

In the runs test, the sequence of the same number (zero or one) is called the run, and the length of runs are treated as random variables. We denote the random variable for the total number of runs as $U$. Let $t_0$ and $t_1$ be the numbers of zeros and ones, respectively, and let $T$ be the length of the sequence, that is, $T=t_0+t_1$. We assume that $t_1 \geq t_0$. Then, the probability distribution of $U$ under the null hypothesis is as follows:
\begin{align*}
    &p(U=2t) = 2({}_{t_0-1} C_{t-1} {}_{t_1-1} C_{t-1})/{}_T C_{t_0}, \\
    &p(U=2t+1) \\
    =&({}_{t_0-1} C_{t-1} {}_{t_1-1} C_{t-2}+{}_{t_0-1} C_{t-2}  {}_{t_1-1} C_{t-1})/{}_{T} C_{t_0},
\end{align*}
where $t=1,2,\cdots,t_0$. 
$p(U)$ is shown to be a normal distribution with average $\mu =1 + (2t_0t_1)/T$ and variance $\sigma^2= 2t_0t_1(2t_0t_1-T)/(T^2(T-1))$.
Then, the randomness of the sequential data is tested by using the test statistic $Z=(U-\mu)/\sigma$.

\begin{remark}
It is known that the Wald--Wolfowitz test is reasonably powerful when the alternative hypothesis has a Markov property~\citep{David1947}. Unfortunately, in our setting, this is not the case. In future work, we will investigate the condition in which the proposed test is the most powerful.
\end{remark}

\subsection{Proposed method for stopping active learning}
We describe a specific algorithm for stopping active learning with a GP. The proposed stopping criterion is also applicable to other active learning frameworks; however, here we simply select a new input datum based on the uncertainty sampling strategy. Various measures of uncertainty exist, and entropy is one of the reasonable measures~\citep{Settles2009}. Because we consider the GP as a predictor, selecting a new input datum with the maximum entropy is equivalent to selecting the point with the maximum variance:
\begin{equation}
    x^\ast=\argmax_{x}\sigma(x,x),
     \label{acquisition_function_gp}
\end{equation}
where $\sigma(x,x)$ is the covariance function of $q(f|S_{t+1})$, which serves as an acquisition function in Eq.~\eqref{acquisition_function}. Let $R$ be the sequence of $\tilde{\cR}(q(f|S_t),q(f|S_{t+1}))$, i.e., $R=\{r_1,r_2,\cdots,r_T\}$ and $r_t=\tilde{\cR}(q(f|S_t),q(f|S_{t+1}))$. Because $R$ is a sequence of continuous values, we cannot directly perform the runs test for $R$. In this work, following the work by~\citet{Jani2014}, each $r_t$ is converted to $1$ if $r_t \geq \mbox{median}(R)$ and $0$ if $r_t < \mbox{median}(R)$. Algorithm~\ref{alg1} summarizes the procedure explained above.

Finally, we consider the computational aspect. The proposed criterion requires computation of the mean function and covariance function of the posterior $q(f|S_t)$ for evaluating $\tilde{\cR}(q_t,q_{t+1})$. However, the computation cost for evaluating $\tilde{\cR}(q_t,q_{t+1})$ is $\mathcal{O}(1)$ since the mean function and covariance function are already calculated during exploration of new datum. Therefore, the overall computational cost is equal to that of the runs test.
\begin{algorithm}[t]
\caption{Active learning with automatic termination by testing convergence of $\cR$}
\label{alg1}
\begin{algorithmic}
    \State Sample $S_1=\{(x_1,y_1)\}$ and initialize $R=\{\}$
    \State Calculate GP posterior $q_1(f|S_1)$
    \For{$t=1,2,\ldots$}
        \State Sample by maximum uncertainty AL
        \State \qquad$x_{t+1}=\argmax_{x}\sigma(x,x)$
        \State Update dataset
        \State \qquad$S_{t+1} \leftarrow S_t \cup \{(x_{t+1},y_{t+1})\}$
        \State Calculate GP posterior $q_{t+1}(f|S_{t+1})$
        \State Update sequence of upper bounds
        \State \qquad$r_t \leftarrow \tilde{\cR}(q_t,q_{t+1})$, $\; R = R\cup\{r_t\}$
        \State Calculate median of $R$
        \State \qquad$m = {\rm median}(R)$
        \State Convert from upper bounds to binary
        \State Initialize $E=\{\}$
        \For{$i=1,2,\ldots,t$}
            \State $e_i \leftarrow {\rm sgn}(r_i-m)$
        \State $E = E \cup e_i$
        \EndFor
        \State Convergence test $E$ by using runs test
        \IIf{${\rm runTest}(E)$} ${\rm \mathbf{break}}$\EndIIf
    \EndFor
\end{algorithmic}
\end{algorithm}

\section{Experimental results}

This section describes the evaluation of the effectiveness of the proposed stopping criterion via a set of regression experiments with one artificial and five real-world datasets. The real-world datasets are obtained from the UCI machine learning repository. Every feature of these datasets is normalized so that their means are zero and their standard deviations are one\footnote{Simple Python implementation for our proposed method and competing methods have been submitted as the supplementary material and will be made publicly available after the review.}.

\subsection{Evaluation measure}
Let $q(f|S_T)$ be the posterior distribution of $f$ obtained using the complete dataset $S_T=\{(X_T,Y_T)\}$, and let $q(f|S_t)$ be the distribution obtained using the dataset $S_t$ of size $t$.
For quantitative evaluation of the determined stopping time, we define the {\it{optimal}} stopping time $t_{\rm{opt}}$ as the minimum data size $t$ that satisfies $ \Ex{q(f|S_t)}{\cL_\cD(f)}\leq \eta$, where $\eta$ is a predefined threshold. For determining the threshold $\eta$ for $t_{\rm opt}$, we use the complete dataset of interest and resampled $50$ points for training and $1950$ points for test for the artificial dataset, and $100$ points for training and the remaining points for test datasets for the real-world datasets $100$ times. By using these $100$ pairs of training and test datasets, we calculate the empirical estimate of the expected generalization errors, and $\eta$ is set to be the average $+$ 2 sd of the generalization errors. There are two possible approaches for active learning, aggressive and conservative~\citep{Bloodgood2009}. Since, basically, the aim of active learning is to save the cost for annotation, in this work we adopt aggressive approach and $\eta$ is set to be average $+ 2\mathrm{sd}$.

Because we consider GP regression, the loss function is defined as $l(f,x,y)=\frac{\beta}{2}(y-f(x))^2+\frac{1}{2}\log{(\beta/2\pi)}$. By denoting the posterior of $f$ by $q(f|S_t)=\mathcal{N}(\mu_t,\sigma_t)$ and the test dataset by $S_{\tilde T}=\{(x_i,y_i)\}^{\tilde T}_{i=1}$, the posterior average of the expected loss can be approximated by
\begin{align*}
    &\Ex{q(f|S_t)}{\cL_\cD(f)} \approx  \Ex{q(f|S_t)}{\cL_{S_{\tilde T}}(f)} \\
    &=\frac{\beta}{2{\tilde T}}\left\{\sum^{\tilde T}_{i=1}(y_i-\mu_t(x_i))^2+{\rm Tr}(\Sigma_t)\right\}+\frac{1}{2}\log{\frac{\beta}{2\pi}},
\end{align*}
With a stopping time of $t_{\ast}$ determined by a certain criterion, we consider
\begin{equation}
    e_{\rm stop}:=|t_\ast-t_{\rm opt}|
    \label{measure_stopping_criterion}
\end{equation}
as a measure of goodness for the stopping criterion.

\subsection{Dataset and methods for comparison}

We considered a simple one-dimensional model
\begin{align*}
    y_i =& e^{-(x_i-2)^2/2}+e^{-(x_i-6)^2/10}+(x^2_i+1)^{-1}+\epsilon_i
\end{align*}
with the additive Gaussian observation noise $p(\epsilon)=\mathcal{N}(0,\beta^{-1})$. From this generative model, we sampled $1,000$ pairs of inputs and outputs $(x_{i},y_i)$, where $x_i$ are uniform i.i.d. samples in $[-5,15]$. Among the $1,000$ pairs, $950$ pairs are retained as the test dataset and the remaining $50$ pairs are pooled for training the prediction model via active learning. Independent sampling of size $1,000$ is repeated $100$ times, and the average and standard deviation of $e_{{\rm{stop}}}$ defined in Eq.~\eqref{measure_stopping_criterion} are reported.

In the experiments, we compare the proposed criterion with the following four criteria:

(1) {\it{PAC-Bayesian criterion}}: The upper bound of the generalization error is approximated by using the conventional PAC-Bayesian result. By denoting the posterior of GP used in the training dataset $S_t$ by $q(f|S_t)$, the following upper bound can be derived~\citep{Alquier2016}:
\begin{align}
&\mathbb{E}_{q(f|S_t)}[\cL_\cD(f)] \nonumber \\
\leq& \mathbb{E}_{q(f|S_t)}[\cL_{S_t}(f)] +t^{-1}\KL{q(f|S_t)}{p(f)} \nonumber \\
&-t^{-1}\log{\delta}+(b-a)^2/2 := a_{t}.
\end{align}
To stabilize the KL divergence, $\kappa \mathbf{I}$ is added to the covariance matrices of the prior and the posterior, where $\kappa=0.01$. When $a_t$ is smaller than a prespecified threshold, we terminate the active learning procedure. The confidence parameter $\delta$ is set to $0.01$. $a$ and $b$ are set to $a=0$ and $b=\max_{y \in Y_T}y-\min_{y \in Y_T}y$. 

(2) {\it{Ground truth}}: We consider the convergence of $\cR(p(f),q(f|S_t))$, which is approximated by the test set $S_{\tilde{T}}$ and denoted by $\cR_{{\rm test}}(p(f),q(f|S_t))$. We stop learning when $\cR_{\rm test}(p(f),q(f|S_t))$ is larger than a certain prespecified threshold.

(3) {\it{Cross-validation criterion}}: We divide the samples collected in the active learning process into training and test datasets and evaluate the expected generalization error by $5$-fold cross validation. When the estimated generalization error is smaller than a predetermined threshold, the learning is stopped. 

(4) {\it{Maximum variance criterion}}: For classification problems, stopping criteria based on the uncertainty of class assignment have been proposed~\citep{Zhu2007,Zhu2008}. Herein, we consider a regression counterpart. Because we use the GP as a predictor, the posterior variance can be used as a measure of uncertainty. When the variance of all the possible or pooled data is smaller than a certain predefined threshold, learning is stopped.

\subsection{Parameter settings}
As a prior for the GP, we use a GP with a Gaussian kernel
$
    k(x,x') = \exp \left(-\frac{1}{2h^2}\|x-x'\|^2 \right).
$
The common parameters for all the methods are scale parameter $h$ for the kernel and variance of observation noise $\beta^{-1}$. These parameters are determined by using the marginal likelihood maximization using training datasets. The complete training dataset is not available in practice, but our aim is to set these parameter values in an objective manner and enable fair comparison.

In practical applications, it is better to update the hyperparameter at every update of the GP model, but the assumption in Corollary 1 does not hold when we change the hyperparameter. Marginalizing w.r.t. the hyperparameter is one of the reasonable approach, but again whether the assumption in Corollary 1 holds for the marginalized KL is uncertain. There are two possible practical approaches for this problem. (1) To assume that the Corollary 1 approximately holds between different hyperparameters and calculate Eq.~\eqref{proposed_upper_bound}. This would be reasonable because sample is added only one by one. (2) To calculate the upper bound by using a new hyper parameter in each time. In this work, we keep using the common parameters for the sake of simplicity.

For the proposed method, we must specify the significance level $\alpha$ for the statistical test. The level is fixed such that the type-I error rate is $0.1\%$. The KL divergence between GP posteriors and the range of the cost function $l$ are calculated in the same manner as for the {\it{PAC-Bayesian criterion}}. 
For the {\it{ground truth}}, for each dataset, we use the complete (training and test) dataset to perform bootstrap resampling $100$ times and evaluate $\cR_{\rm test}(p(f),q(f|S_t))$; the threshold for $\cR_{\rm test}(p(f),q(f|S_t))$ is set to the average $-$ 2$\times$sd of the bootstrap samples of $\cR_{\rm test}(p(f),q(f|S_t))$. 
The other three methods also require thresholds for termination. Because there is no universally applicable and objective method for setting the threshold, we considered one dataset, ``airfoil self-noise," as the reference dataset. In particular, we select the threshold minimizing Eq.~\eqref{measure_stopping_criterion} from a set of thresholds for each method. The set of thresholds is generated by sampling $10,000$ points at equal intervals within a range; the ranges is set to $[0.01,100]$, $[0.001,10]$, and $[0.0001,1]$ for the {\it{PAC-Bayesian criterion}}, {\it{cross-validation criterion}} and {\it{maximum variance criterion}}, respectively. 

It should be emphasized that the method of setting the threshold for the {\it{ground truth}} is not applicable in actual situations, which is the reason the method is called the {\it{ground truth}}. For the other three methods, because there exist no standard and objective threshold determination methods, we set the threshold values by using the reference dataset. In other words, these four criteria utilize the reference dataset, which is not available in practice, and the experimental setting is thus beneficial to them.

\begin{figure*}[t!]
    \centering
    \begin{tabular}{ccc}
    \includegraphics[width=5.4cm]{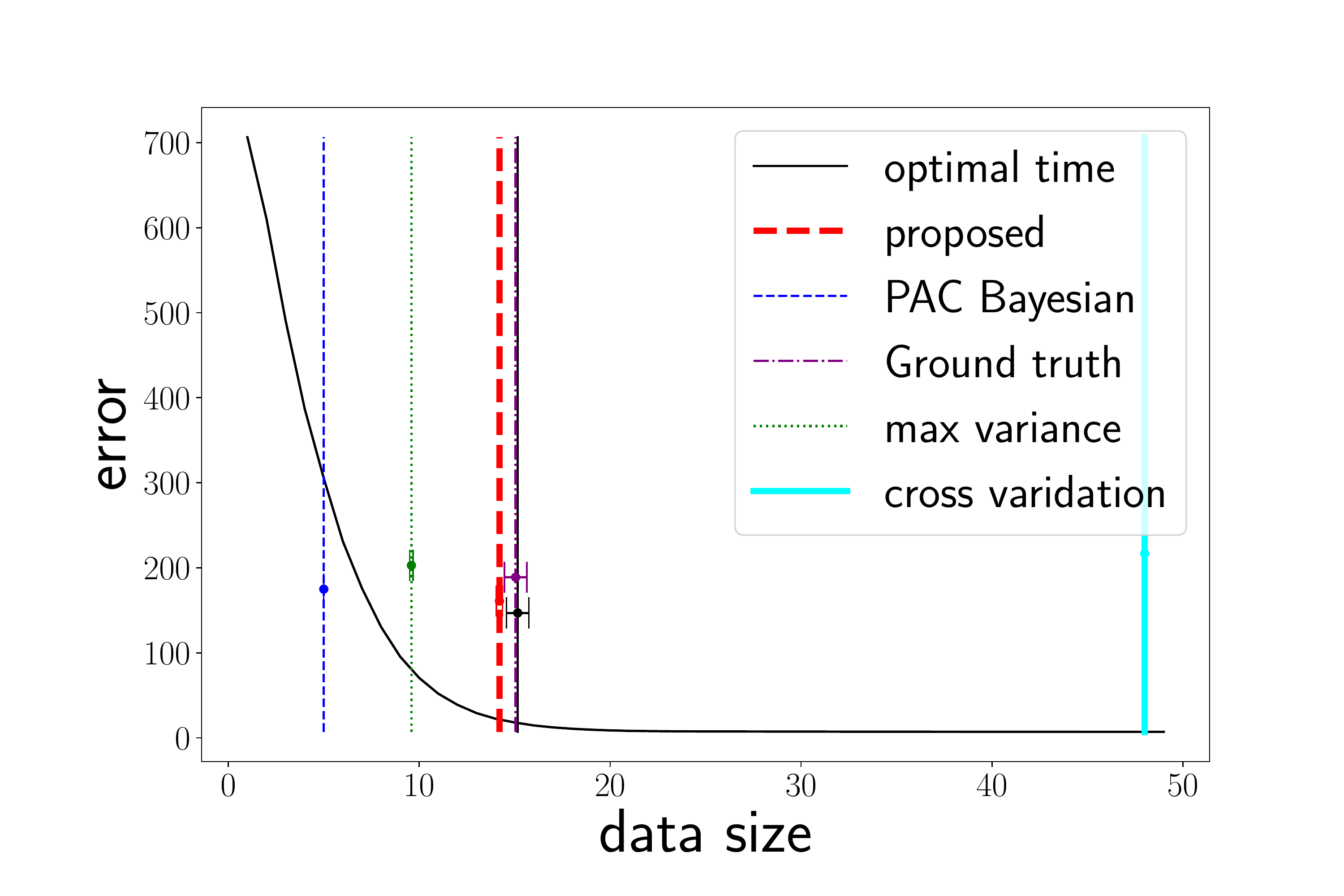}&
    \includegraphics[width=5.4cm]{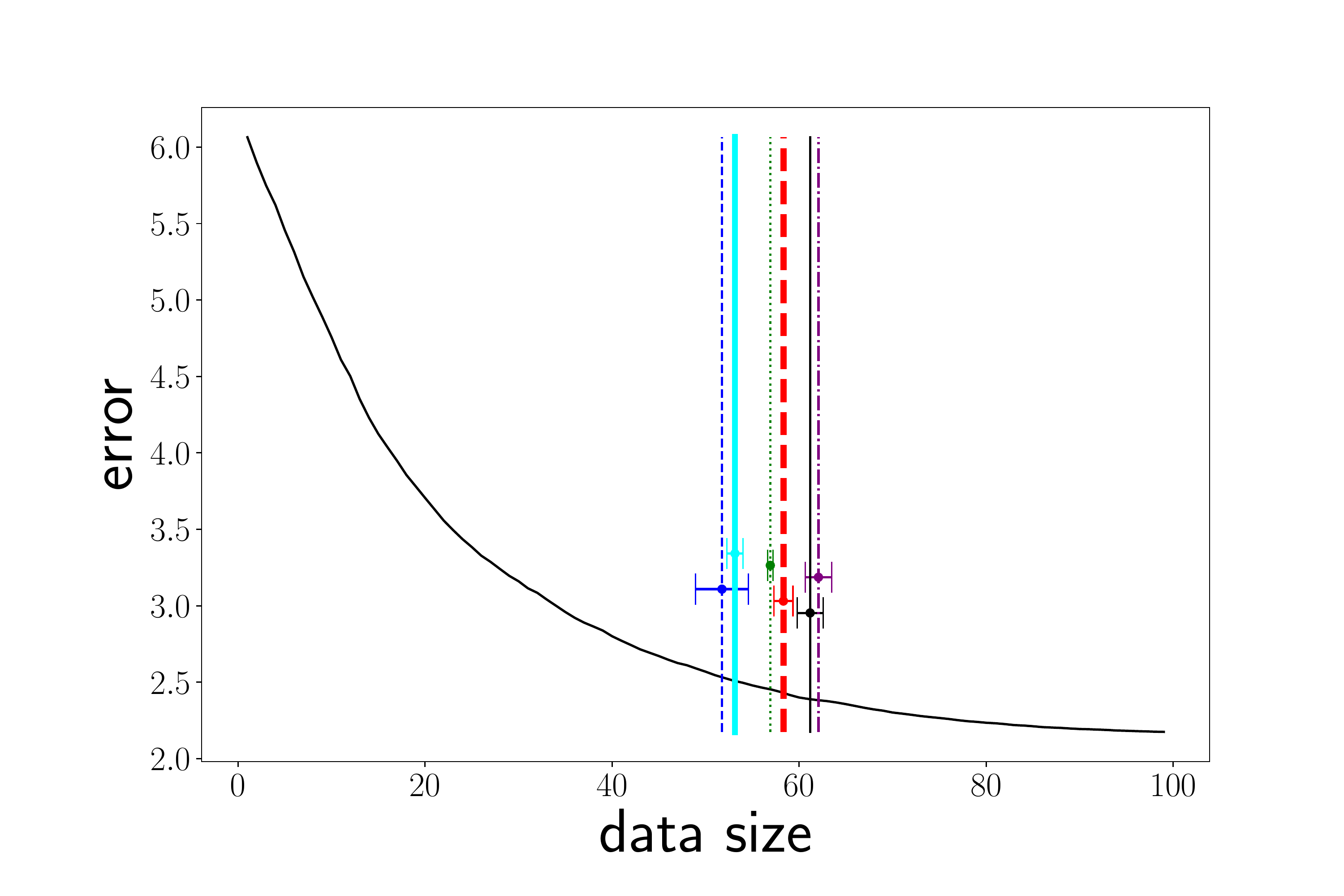}&
    \includegraphics[width=5.4cm]{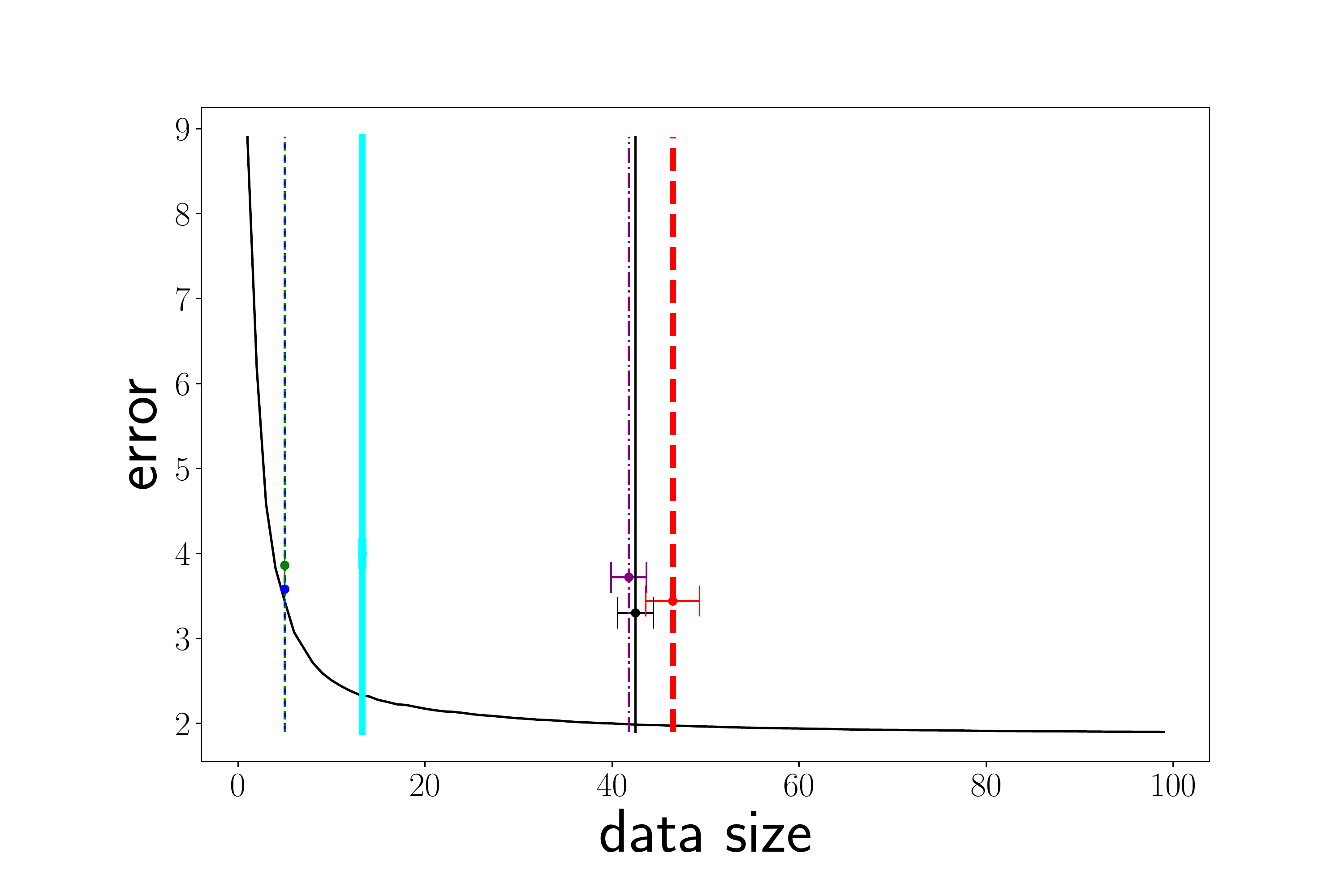} \\
    (a) artificial& (b) airfoil self-noise& (c) power plant \\
    \includegraphics[width=5.4cm]{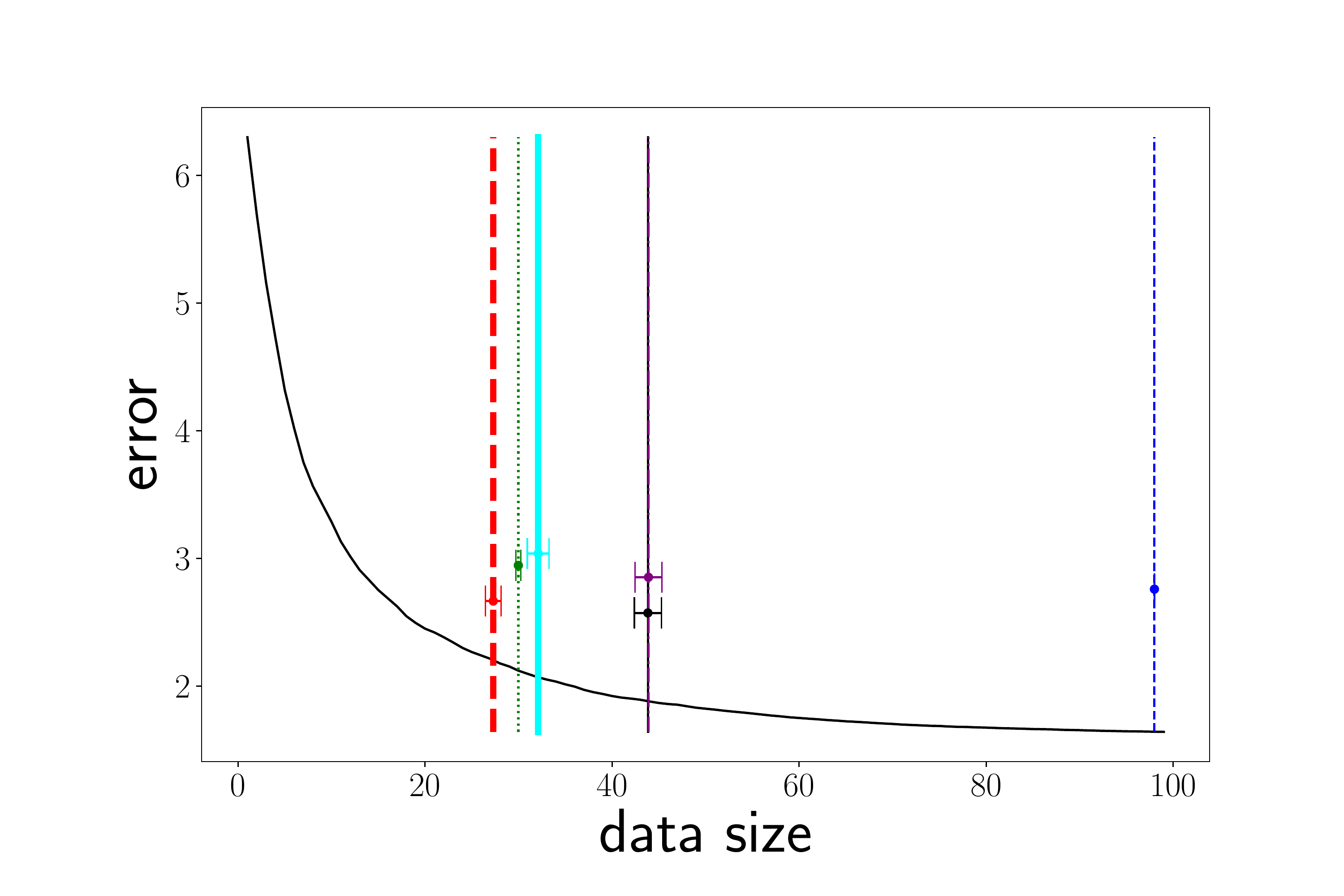}&
    \includegraphics[width=5.4cm]{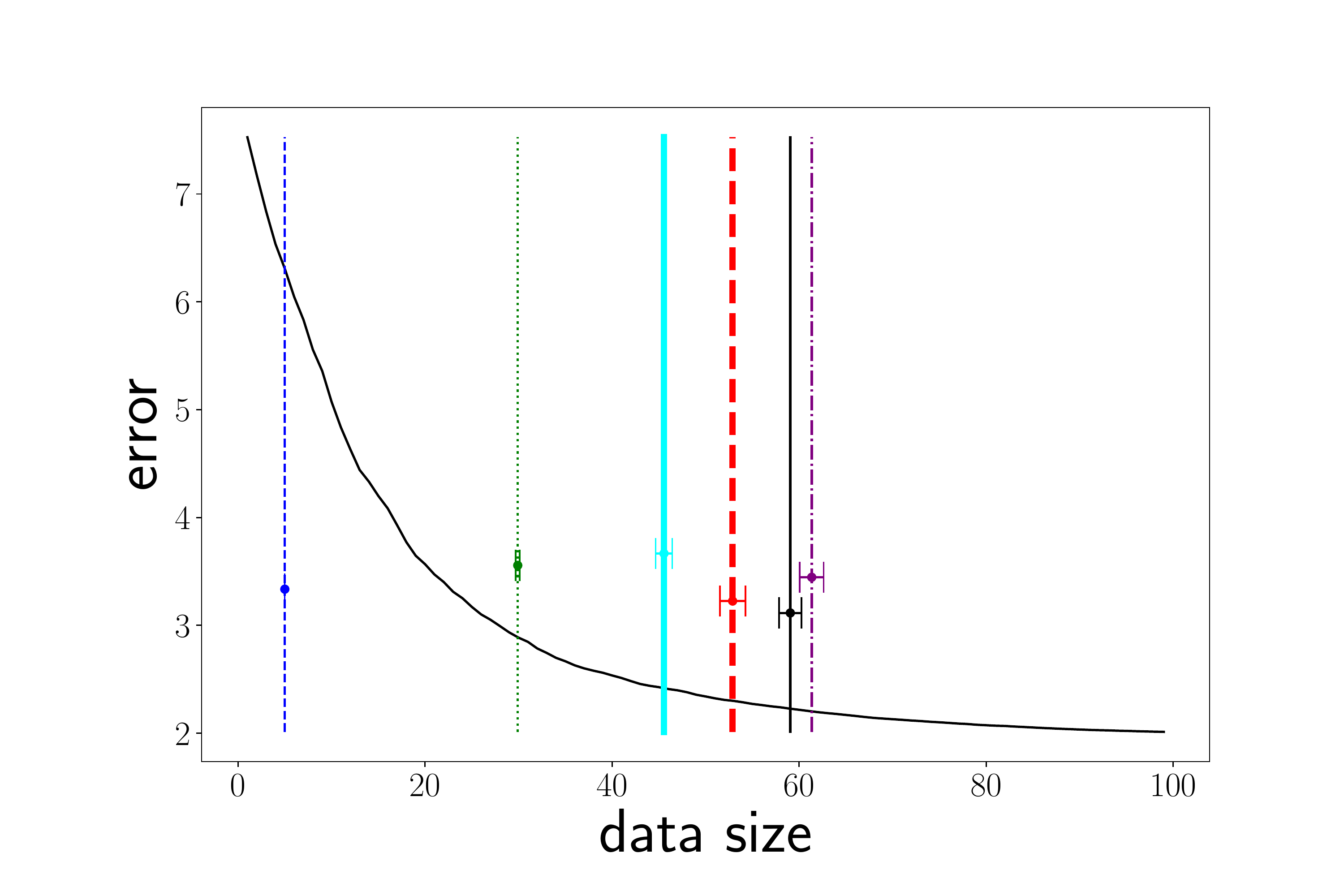}&
    \includegraphics[width=5.4cm]{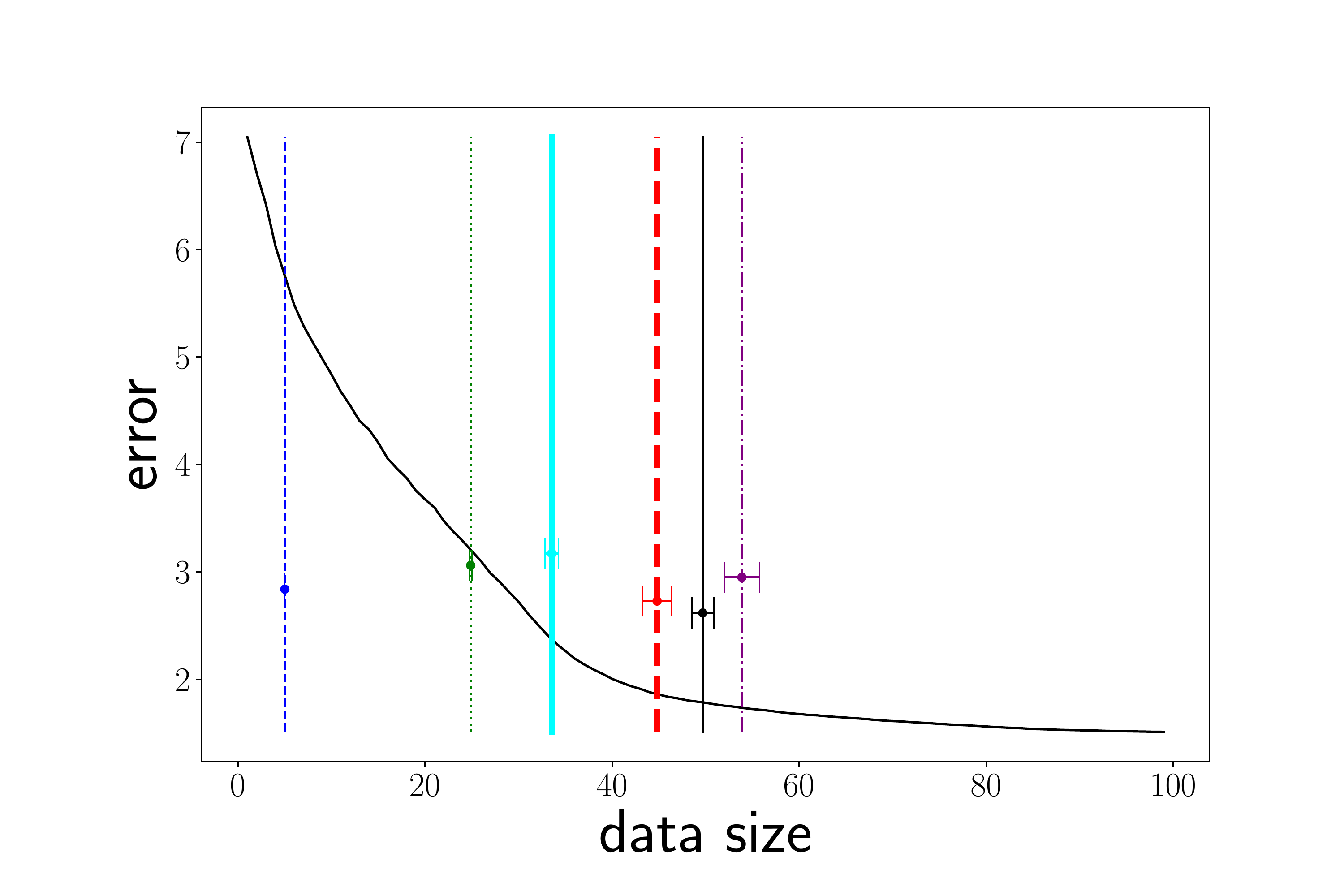} \\
    (d) protein& (e) concrete& (f) yacht
    \end{tabular}
    \caption{Averaged generalization error and training sample size. Vertical lines correspond to the optimal time and the stopping times determined by various methods.}
    \label{real_data_generalization_error}
\end{figure*}

\begin{table*}[thb]
  \centering
  \caption{Average and standard error of $e_{\rm stop}$.}
  \scalebox{0.8}{
  \begin{tabular}{l|c|c|c|c|c|c}
                    & artificial & airfoil self-noise & power plant &  protein & concrete & yacht \\ \hline \hline
    \# of samples/features
    & 2000/1 & 1503/6 & 10721/8 & 45730/9 & 1030/9 & 308/7  \\ \hline
    Ground truth & $0.25 \pm 0.05$ & $1.81 \pm 0.25$ & $2.11 \pm 0.39$ & $0.06 \pm 0.03$ & $4.29 \pm 0.58$ & $15.5 \pm 1.52$ \\ \hline
    Proposed & $\mathbf{2.38 \pm 0.58}$ & $13.52 \pm 1.05$ & $\mathbf{27.89 \pm 2.34}$ & $17.26 \pm 1.5$ & $\mathbf{15.83 \pm 1.28}$ & $\mathbf{16.33 \pm 1.28}$ \\
    PAC-Bayesian& $10.16 \pm 0.59$ & $30.35 \pm 1.8$ & $37.5 \pm 1.93$ & $54.17 \pm 1.44$ & $54.06 \pm 1.19$ & $44.7 \pm 1.18$ \\
    Cross validation & $32.84 \pm 0.59$ & $13.84 \pm 1.39$ & $29.2 \pm 1.92$ & $14.35 \pm 1.21$ & $17.09 \pm 1.35$ & $17.69 \pm 1.42$ \\
    Maximum variance & $5.57 \pm 0.61$ & $\mathbf{10.52 \pm 1.12}$ & $37.5 \pm 1.93$ & $\mathbf{14.26 \pm 1.39}$ & $29.15 \pm 1.25$ & $24.82 \pm 1.2$ \\
  \end{tabular}
  }
  \label{stopping_timing_accuracy_tbl_real_data}
\end{table*}

\subsection{Results}

The expected generalization error evaluated by the test dataset over the number of training samples is plotted in Fig.~\ref{stopping_timing_accuracy_tbl_real_data} for six datasets. 
From Fig.~\ref{real_data_generalization_error} (d), it can be noted that the stopping time determined by using the proposed method can be considerably different from $t_{\rm opt}$ in some cases, although the proposed method tends to terminate active learning when the training has been converged or is about to converge. The {\it{max variance criterion}} and {\it{cross validation criterion}} also offer stable and reasonable stopping times. The {\it{PAC-Bayesian criterion}} tends to either stop learning too early or overshoot the reasonable stopping time. 

Table~\ref{stopping_timing_accuracy_tbl_real_data} summarizes the values of the average and standard error of $e_{\rm stop}$ calculated in $100$ runs with independent resampling of the training and test datasets.
Except for the {\it{ground truth}}, the proposed method achieved the smallest average error in four cases over the six datasets. The experimental results indicate that the proposed criterion can accurately and stably determine when to stop active learning without using the test dataset for evaluating the stopping criterion or determining the threshold.

We will add reasoning on the experimental performance for each compared methods:
"PAC-Bayes" contains the training errors, and implicitly assumes the independence for training samples. Our bound does not contain training error term and remains valid in more general situation. We conjecture this is one of the reason why our method outperforms conventional PCA-Bayes. 
"CV" splits dataset so it uses less data for both training and validation. This would make the accuracy of the estimated test error low. Our method does not require a test data, leading to better performance. 
"Max variance" is directly connected to the acquisition function used for AL, while proposed method considers the divergence between posterior distributions of the predictive model. In certain ideal case, the proposed method could be considered as solely based on entropy, but before the convergence of the predictive model, both variance and mean largely affect the gap between before and after adding a new sample. This would be one of the reason why the max variance approach is nice but in many cases our method outperforms others.

\section{Conclusion}
We proposed a criterion for stopping active learning based on the PAC-Bayesian theory and a runs test. A noteworthy fact regarding the proposed criterion is that the gap between the expected generalization errors w.r.t. the posterior distributions before and after adding a new sample is deterministically bounded. The criterion does not require the test dataset for evaluating the generalization error and enables stopping of active learning automatically in a statistically reliable manner. Moreover, although we concentrated on GP regression in this study, the criterion can be used for both classification and regression problems with an arbitrary cost function. In the experiments, the effectiveness of the criterion was demonstrated in the cases of both an artificial dataset and real-world datasets.

Our newly derived upper bound does not assume independence of observation, and applicable to any objective function or any posterior distribution of the predictor. Therefore, the proposed bound will be used for other learning frameworks such as Bayesian optimization and online learning besides AL. The applicability of our new bound to other learning framework is one of the important future works.

\section*{Acknowledgement}
The authors express special thanks to the anonymous reviewers whose comments led to valuable improvements of this paper. Part of this work is supported by JST CREST JPMJCR1761. 

\bibliography{reference}
\bibliographystyle{unsrtnat}

\appendix
\section{Proof of Theorem 1 and Corollary 1}
We demonstrate the following three lemmas to prove Theorem 1 and Corollary 1.
\begin{lemma}\citep{Donsker1975,McAllester2003}
Let $\phi:\cF \rightarrow \mathbb{R}$ be any measurable function. Then, the following inequality holds:
\begin{equation}
    \Ex{p(f)}{\phi(f)} \leq \KL{p(f)}{p'(f)} + \log{\Ex{p'(f)}{e^{\phi(f)}}}.
    \label{change_of_measure}
\end{equation}
Here, $p$ and $p'$ are the probability distributions on $\cF$.
\label{lemma1}
\end{lemma}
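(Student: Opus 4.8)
The plan is to prove Lemma~\ref{lemma1} via the Gibbs variational principle, realized through a single application of Jensen's inequality after a change of measure. First I would dispose of the degenerate cases: if $\Ex{p'(f)}{e^{\phi(f)}}=\infty$ the right-hand side is $+\infty$ and there is nothing to prove; if $p$ is not absolutely continuous with respect to $p'$, then $\KL{p(f)}{p'(f)}=+\infty$ and again the inequality is trivial. So assume $p \ll p'$ with Radon--Nikodym derivative $\frac{dp}{dp'}$, and that $Z:=\Ex{p'(f)}{e^{\phi(f)}}$ is finite.

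Next I would push the expectation defining $Z$ onto $p$, writing $\log Z = \log \Ex{p(f)}{e^{\phi(f)}\frac{dp'}{dp}(f)}$ with $\frac{dp'}{dp}=\bigl(\frac{dp}{dp'}\bigr)^{-1}$ $p$-a.e.\ on the support of $p$. Since $\log$ is concave, Jensen's inequality gives
\[
\log Z \;\geq\; \Ex{p(f)}{\log\!\left(e^{\phi(f)}\tfrac{dp'}{dp}(f)\right)} \;=\; \Ex{p(f)}{\phi(f)} + \Ex{p(f)}{\log\tfrac{dp'}{dp}(f)} \;=\; \Ex{p(f)}{\phi(f)} - \KL{p(f)}{p'(f)},
\]
and rearranging yields the claimed bound. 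Equivalently, one can introduce the tilted (``Gibbs'') distribution $g(f)\propto e^{\phi(f)}p'(f)$ and use $\KL{p(f)}{g(f)}\geq 0$, expanding the KL divergence to obtain the same identity; this route makes the equality case $p=g$ transparent and is perhaps the cleaner presentation.

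The only real subtlety, and the main obstacle, is the measure-theoretic bookkeeping: ensuring the Radon--Nikodym derivatives are well defined $p$-a.e., that $\Ex{p(f)}{\phi(f)}$ does not take the indeterminate form $\infty-\infty$, and that the footnoted construction (transporting $\cL_\cD$ to a distribution on $[a,b]$) is precisely what later licenses applying this lemma with $\phi=\cL_\cD$. I would therefore state the integrability hypothesis explicitly (it suffices that $\phi\in L^1(p)$, with $\Ex{p(f)}{\phi(f)}=-\infty$ being a harmless degenerate case) and otherwise the argument is a two-line computation. As this is the classical Donsker--Varadhan / McAllester change-of-measure inequality, I would keep the proof self-contained only to the extent needed for the bounded-loss setting and cite \citet{Donsker1975,McAllester2003} for the general statement.
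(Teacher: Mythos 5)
Your proof is correct. The paper itself does not prove this lemma at all --- it states it with citations to \citet{Donsker1975} and \citet{McAllester2003} and moves on --- and your argument (Jensen's inequality after a change of measure, or equivalently nonnegativity of $\KL{p(f)}{g(f)}$ for the tilted distribution $g(f)\propto e^{\phi(f)}p'(f)$) is precisely the classical proof found in those references. Your attention to the degenerate cases and to integrability is sound, and as you note, in the paper's only application $\phi=\cL_\cD$ takes values in $[a,b]$, so those caveats are automatically satisfied.
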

\begin{lemma}\citep{Simic2008}
Let $h:X \rightarrow \mathbb{R}$ be a concave function, where $X\in[a,b]$. $p$ is a probability distribution with respect to $X$. We denote the difference of Jensen's inequality by $J(p,X)$, that is,
\begin{equation}
        J(p,X)=h(\Ex{p}{X})-\Ex{p}{h(X)}.
\end{equation}
Then, the following inequality holds:
\begin{equation}
    J(p,X) \leq 2h(\frac{a+b}{2})-h(a)-h(b).
\end{equation}
\label{lemma2}
\end{lemma}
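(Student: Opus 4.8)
The plan is to deduce this refinement of Jensen's inequality from two ingredients: the concavity of $h$, which lets us replace $\Ex{p}{h(X)}$ by the chord of $h$ over $[a,b]$, and an elementary one-variable estimate for the largest amount by which a concave curve can exceed that chord.

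First I would use concavity pointwise. For every $x\in[a,b]$ one has $x=\frac{b-x}{b-a}\,a+\frac{x-a}{b-a}\,b$, so concavity gives $h(x)\ge \frac{b-x}{b-a}h(a)+\frac{x-a}{b-a}h(b)$; the right-hand side is affine in $x$, so taking the $p$-expectation and writing $m:=\Ex{p}{X}\in[a,b]$ yields $\Ex{p}{h(X)}\ge \frac{b-m}{b-a}h(a)+\frac{m-a}{b-a}h(b)$. Substituting this into $J(p,X)=h(m)-\Ex{p}{h(X)}$ shows $J(p,X)\le g(m)$, where $g(x):=h(x)-\frac{b-x}{b-a}h(a)-\frac{x-a}{b-a}h(b)$ is the amount by which the graph of $h$ lies above its chord. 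Note that $g$ is concave (a concave function minus an affine one) and satisfies $g(a)=g(b)=0$, hence $g\ge0$ on $[a,b]$, and moreover $2g\big(\frac{a+b}{2}\big)=2h\big(\frac{a+b}{2}\big)-h(a)-h(b)$, which is exactly the constant appearing in the statement.

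So it remains to show $\max_{x\in[a,b]}g(x)\le 2\,g\big(\frac{a+b}{2}\big)$. Let $m^\ast$ be a maximizer of $g$; the cases $a=b$ and $g\equiv0$ are trivial, so assume $a<b$ and, without loss of generality, $m^\ast\le\frac{a+b}{2}$ (the case $m^\ast\ge\frac{a+b}{2}$ is symmetric). Then $\frac{a+b}{2}$ lies in $[m^\ast,b)$, so $\frac{a+b}{2}=\lambda m^\ast+(1-\lambda)b$ with $\lambda=\frac{(b-a)/2}{\,b-m^\ast\,}$; since $a\le m^\ast$ forces $b-m^\ast\le b-a$, we get $\lambda\ge\frac12$. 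Concavity of $g$ together with $g(b)=0$ gives $g\big(\frac{a+b}{2}\big)\ge\lambda\,g(m^\ast)$, and since $g\big(\frac{a+b}{2}\big)\ge0$ and $\frac1\lambda\le2$ this yields $g(m^\ast)\le\frac1\lambda g\big(\frac{a+b}{2}\big)\le 2\,g\big(\frac{a+b}{2}\big)$, as required.

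I expect the only real (and still minor) obstacle to be this final geometric step: one has to split on which side of the midpoint the maximizer lies, dispose of the degenerate configurations ($a=b$, $g\equiv0$, or $m^\ast$ at an endpoint), and keep track of the two facts $\lambda\ge\frac12$ and $g\big(\frac{a+b}{2}\big)\ge0$ that are used together to pass from $\frac1\lambda$ to $2$; everything else is bookkeeping. Once this lemma is in hand it combines with the change-of-measure inequality of Lemma~\ref{lemma1} (applied with $\phi=\cL_\cD$, after the measurability reduction described in the footnote to Theorem~\ref{theorem1}) to produce the deterministic constant $C$ and hence Theorem~\ref{theorem1}.
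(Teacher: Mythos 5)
Your proof is correct. Note that the paper itself does not prove this lemma at all---it is imported verbatim from \citet{Simic2008} as a known result---so there is no in-paper argument to compare against; what you have supplied is a valid, self-contained elementary proof. Each step checks out: the chord bound $\Ex{p}{h(X)}\ge \frac{b-m}{b-a}h(a)+\frac{m-a}{b-a}h(b)$ follows from pointwise concavity plus linearity of expectation, giving $J(p,X)\le g(m)$ with $g$ the (concave, nonnegative, endpoint-vanishing) excess of $h$ over its chord; the identity $2g\bigl(\frac{a+b}{2}\bigr)=2h\bigl(\frac{a+b}{2}\bigr)-h(a)-h(b)$ is a direct computation; and the final step $\max_{[a,b]}g\le 2g\bigl(\frac{a+b}{2}\bigr)$ is handled correctly, including the verification that $\lambda=\frac{(b-a)/2}{b-m^\ast}\in[\tfrac12,1]$ when $m^\ast\le\frac{a+b}{2}$, so that the midpoint is a genuine convex combination of $m^\ast$ and $b$ and concavity together with $g(b)=0$ yields $g(m^\ast)\le\lambda^{-1}g\bigl(\frac{a+b}{2}\bigr)\le 2g\bigl(\frac{a+b}{2}\bigr)$. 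This constant is sharp (consider a tent function with its peak pushed toward an endpoint), so no improvement is being left on the table. Your closing remark about how the lemma is consumed is also accurate: in the proof of Theorem~\ref{theorem1} it is applied with $h=\log$ to the variable $e^{\cL_\cD(f)}\in[e^a,e^b]$, which is what produces $C=2\log\frac{e^a+e^b}{2}-a-b$.
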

\begin{lemma}

Let $q(f|S)$ and $q(f|S')$ be the posteriors with respect to $f$ given $S=(X,Y)$ and $S'=(X',Y')$, respectively. We assume that the prior of $q(f|S)$ is the same as that of $q(f|S')$. Then, the following inequality holds:
\begin{equation}
    \KL{q(f|S)}{q(f|S')} = \KL{q(\mathbf{f}_{X_+}|S))}{q(\mathbf{f}_{X_+}|S'))},
\end{equation}
\label{lemma3}
where $X_+:=X\cup X'$.
\end{lemma}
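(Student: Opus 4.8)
The plan is to exploit that the likelihood entering Bayes' rule sees $f$ only through its values at the finitely many training inputs. As a result each posterior factors into a finite-dimensional marginal over $\mathbf{f}_{X_+}$ times a conditional factor, and because $q(f|S)$ and $q(f|S')$ are built from the \emph{same} prior, this conditional factor is identical for the two and cancels inside the KL divergence.

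Concretely, following the derivation in Section~3.1 write $p(Y|\mathbf{f}_X,X)=e^{-t\cL_S(f)}$; this is a measurable function of $\mathbf{f}_X$ alone, hence of $\mathbf{f}_{X_+}$ since $X\subseteq X_+$. Disintegrating the prior as $p(f)=p(\mathbf{f}_{X_+})\,p(f\,|\,\mathbf{f}_{X_+})$ --- ordinary Gaussian conditioning in the GP case, and a regular conditional distribution in general --- Bayes' rule yields $q(f|S)=q(\mathbf{f}_{X_+}|S)\,p(f\,|\,\mathbf{f}_{X_+})$ and, with the same conditional factor, $q(f|S')=q(\mathbf{f}_{X_+}|S')\,p(f\,|\,\mathbf{f}_{X_+})$. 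Hence the Radon--Nikodym derivative $\frac{dq(f|S)}{dq(f|S')}=\frac{q(\mathbf{f}_{X_+}|S)}{q(\mathbf{f}_{X_+}|S')}$ depends on $f$ only through $\mathbf{f}_{X_+}$, and
\begin{align*}
\KL{q(f|S)}{q(f|S')}&=\Ex{q(f|S)}{\log\frac{q(\mathbf{f}_{X_+}|S)}{q(\mathbf{f}_{X_+}|S')}}\\
&=\Ex{q(\mathbf{f}_{X_+}|S)}{\log\frac{q(\mathbf{f}_{X_+}|S)}{q(\mathbf{f}_{X_+}|S')}}=\KL{q(\mathbf{f}_{X_+}|S)}{q(\mathbf{f}_{X_+}|S')},
\end{align*}
the middle step being that the expectation of a function of $\mathbf{f}_{X_+}$ under $q(f|S)$ equals its expectation under the marginal $q(\mathbf{f}_{X_+}|S)$. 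Equivalently, this is just the chain rule for relative entropy applied to the joint law of $(\mathbf{f}_{X_+},\text{ rest of }f)$, whose conditional parts coincide and therefore contribute nothing.

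The only genuine obstacle is measure-theoretic bookkeeping rather than the idea itself: one must justify the disintegration $p(f)=p(\mathbf{f}_{X_+})p(f\,|\,\mathbf{f}_{X_+})$ and the absolute continuity underlying the Radon--Nikodym derivative when $\cF$ is infinite-dimensional. For the GP used throughout the paper this is elementary; in general it follows from standard results on regular conditional probabilities together with the chain rule for KL divergence, so I would either present the argument at the level of densities (as the paper does elsewhere) or simply invoke the chain-rule identity.
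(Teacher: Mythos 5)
Your proposal is correct and follows essentially the same route as the paper: the paper applies the chain rule for KL divergence to the joint law of $(\mathbf{f}_{X_+},\mathbf{f}_{X_\ast})$ and then shows via Bayes' theorem that $q(\mathbf{f}_{X_\ast}|\mathbf{f}_{X_+},S)=p(\mathbf{f}_{X_\ast}|\mathbf{f}_{X_+})=q(\mathbf{f}_{X_\ast}|\mathbf{f}_{X_+},S')$, so the conditional term vanishes --- exactly the cancellation you identify through the factorization $q(f|S)=q(\mathbf{f}_{X_+}|S)\,p(f|\mathbf{f}_{X_+})$. Your remark that the two phrasings (Radon--Nikodym derivative depending only on $\mathbf{f}_{X_+}$ versus the chain-rule identity) are equivalent is accurate, and the measure-theoretic caveats you flag are the right ones.
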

\begin{proof}
Let $X_\Omega$ be a universal set of input data. We denote $X_\Omega/X_+$ by $X_\ast$. Then, from the chain rule of KL divergence~\citep{Gray2011}, the following equation hold:
\begin{align}
    &\KL{q(f|S)}{q(f|S')} \\
    =&\KL{q(\mathbf{f}_{X_+}|S)}{q(\mathbf{f}_{X_+}|S')} \nonumber \\
    &+\Ex{q(\mathbf{f}_{X_+}|S)}{\KL{q(\mathbf{f}_{X_\ast}|\mathbf{f}_{X_+},S)}{q(\mathbf{f}_{X_\ast}|\mathbf{f}_{X_+},S')}}.
    \label{chain_rule}
\end{align}
We denote the prior of $q(\mathbf{f}_{X_\ast},\mathbf{f}_{X_+}|S)$ and $q(\mathbf{f}_{X_\ast},\mathbf{f}_{X_+}|S')$ by $p(\mathbf{f}_{X_\ast},\mathbf{f}_{X_+})$. Then, from the Bayesian theorem, the following equation holds:
\begin{align}
    q(\mathbf{f}_{X_\ast}|\mathbf{f}_{X_+},S)) =& \frac{p(\mathbf{f}_{X_\ast},\mathbf{f}_{X_+}|S)}{p(\mathbf{f}_{X_+}|S)} \\
    =&\frac{p(Y|\mathbf{f}_{X_+},X)p(\mathbf{f}_{X_\ast}|\mathbf{f}_{X_+})p(\mathbf{f}_{X_+})}{p(Y|X)} \nonumber \\
    &\times \frac{p(Y|X)}{p(Y|\mathbf{f}_{X_+},X)p(\mathbf{f}_{X_+})}  \\
    =&\frac{p(\mathbf{f}_{X_\ast},\mathbf{f}_{X_+})}{p(\mathbf{f}_{X_+})} =p(\mathbf{f}_{X_\ast}|\mathbf{f}_{X_+}).
\end{align}
Similarly, $q(\mathbf{f}_{X_\ast}|\mathbf{f}_{X_+},S')=p(\mathbf{f}_{X_\ast}|\mathbf{f}_{X_+})$ also holds. Therefore, if the prior of $q(f|S)$ is the same as that of $q(f|S')$, the second term of Eq.~\eqref{chain_rule} is zero.
\end{proof}
\textbf{Proof of Theorem 1}
\begin{proof}
By using Lemmas~\ref{lemma1} and ~\ref{lemma2}, the upper bound for $\cR(q(f|S),q(f|S'))$ is obtained as follows:
\begin{align}
    &\cR(q(f|S),q(f|S')) \\
    \leq& \KL{q(f|S)}{q(f|S')} \nonumber\\
    &+ \log{\Ex{q(f|S')}{e^{\cL_\cD(f)}}} -\Ex{q(f|S')}{\log{e^{\cL_\cD(f)}}} \label{proof_theorem1_1} \\
    \leq& \KL{q(f|S)}{q(f|S')} + 2\log{\frac{e^a+e^b}{2}}-a-b. \label{proof_theorem1_2}
\end{align}
By applying Lemma~\ref{lemma1} to Eq.~\eqref{change_of_measure}, we obtain Eq.~\eqref{proof_theorem1_1}.
Because the sum of the second and third terms of Eq.~\eqref{proof_theorem1_1} is the difference of Jensen's inequality, Lemma~\ref{lemma2} can be applied to it. Moreover, from $l\in[a,b]$, $\cL_\cD(f) \in [a,b]$ holds. Therefore, we obtain Eq.~\eqref{proof_theorem1_2}.
\end{proof}
\textbf{Proof of Corollary 1}
\begin{proof}
The proof is evident from Theorem 1 and Lemma~\ref{lemma3}.
\end{proof}

\section{Calculation of KL divergence between GPs}
\begin{lemma}
Let $q(f|S_t)$ and $q(f|S_{t+1})$ be the GP posteriors given $S_t=\{(x_i,y_i)\}^t_{i=1}$ and $S_{t+1}=\{(x_i,y_i)\}^{t+1}_{i=1}$, respectively. We assume that the prior of $q(f|S_t)$ is the same as that of $q(f|S_{t+1})$.
Let $\mu_t$ and $\sigma_t$ and $\beta$ be the mean and covariance functions of $q(f|S_t)$ and accuracy of Gaussian noise, respectively.Then the following equation holds:
\begin{align}
    &\KL{q(f|S_t)}{q(f|S_{t+1})} \nonumber \\
    =&\frac{1}{2}\beta \sigma_t(x_{t+1},x_{t+1})-\frac{1}{2}\log{(1+\beta \sigma_t(x_{t+1},x_{t+1}))} \nonumber \\
    &+\frac{1}{2}\frac{\beta \sigma_t(x_{t+1},x_{t+1})}{\sigma_t(x_{t+1},x_{t+1})+\beta^{-1}}(y_{t+1}-\mu_t(x_{t+1}))^2.
    \label{eq_gp_kl}
\end{align}
\label{lemma_gp_kl}
\end{lemma}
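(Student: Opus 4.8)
The plan is to reduce the KL divergence between the two GP posteriors to a KL divergence between two finite-dimensional Gaussians. By Lemma~\ref{lemma3}, since $q(f|S_t)$ and $q(f|S_{t+1})$ share the same prior, we have $\KL{q(f|S_t)}{q(f|S_{t+1})} = \KL{q(\mathbf{f}_{X_{t+1}}|S_t)}{q(\mathbf{f}_{X_{t+1}}|S_{t+1})}$, where $X_{t+1} = \{x_1,\dots,x_{t+1}\}$ (note $X_t \cup X_{t+1} = X_{t+1}$). Both sides are now Gaussians over the same $(t+1)$-dimensional vector, so I would invoke the closed-form expression $\KL{\mathcal{N}(\mu_0,\Sigma_0)}{\mathcal{N}(\mu_1,\Sigma_1)} = \tfrac12\bigl(\mathrm{Tr}(\Sigma_1^{-1}\Sigma_0) - d + (\mu_1-\mu_0)\T\Sigma_1^{-1}(\mu_1-\mu_0) + \log\frac{|\Sigma_1|}{|\Sigma_0|}\bigr)$.

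Next I would exploit the recursive structure of GP conditioning: going from $q(f|S_t)$ to $q(f|S_{t+1})$ is exactly one Bayesian update with the single observation $(x_{t+1},y_{t+1})$ under Gaussian noise of precision $\beta$. Writing $\mathbf{f}_{X_{t+1}} = (\mathbf{f}_{X_t}, f(x_{t+1}))$, the standard rank-one GP update gives $\mu_{t+1}$ and $\Sigma_{t+1}$ in terms of $\mu_t$, $\Sigma_t$, the cross-covariance vector, and the scalar gain $\frac{\sigma_t(x_{t+1},x_{t+1})}{\sigma_t(x_{t+1},x_{t+1})+\beta^{-1}}$. Substituting these into the Gaussian KL formula, I expect the trace term and log-determinant term to collapse — the matrix-determinant lemma handles $\log(|\Sigma_{t+1}|/|\Sigma_t|)$, yielding $-\log(1+\beta\sigma_t(x_{t+1},x_{t+1}))$ up to factors, and a Woodbury-type identity handles $\mathrm{Tr}(\Sigma_{t+1}^{-1}\Sigma_t) - d$, leaving $\beta\sigma_t(x_{t+1},x_{t+1})$ after cancellation. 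The quadratic form $(\mu_{t+1}-\mu_t)\T\Sigma_{t+1}^{-1}(\mu_{t+1}-\mu_t)$ should reduce to $\frac{\beta\sigma_t(x_{t+1},x_{t+1})}{\sigma_t(x_{t+1},x_{t+1})+\beta^{-1}}(y_{t+1}-\mu_t(x_{t+1}))^2$, since the mean shift is proportional to the prediction residual $y_{t+1}-\mu_t(x_{t+1})$ times a vector built from the cross-covariances.

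The main obstacle will be the bookkeeping in the trace and quadratic-form terms: $\Sigma_{t+1}^{-1}$ is not available in closed form directly, so I would either apply the Woodbury identity to $\Sigma_{t+1} = \Sigma_t - \frac{1}{\sigma_t(x_{t+1},x_{t+1})+\beta^{-1}}\,\mathbf{v}\mathbf{v}\T$ (with $\mathbf{v}$ the appropriate covariance column), or — cleaner — work in the whitened coordinates where the single new observation is axis-aligned, so that the $(t+1)$-dimensional KL factors as a $1$-dimensional Gaussian KL plus a zero term. That latter route makes the cancellations transparent: the update only affects the ``new'' coordinate after whitening, the old block contributes nothing, and the scalar KL between two univariate Gaussians immediately produces the three terms of Eq.~\eqref{eq_gp_kl}. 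I would present the whitened-coordinate argument as the primary derivation and relegate the brute-force matrix algebra to a remark.
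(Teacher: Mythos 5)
Your proposal is correct and would reach Eq.~\eqref{eq_gp_kl}, but it takes a genuinely different route from the paper's. Both start by invoking Lemma~\ref{lemma3} to reduce the process-level KL to the $(t+1)$-dimensional marginals over $\mathbf{f}=(f(x_1),\dots,f(x_{t+1}))$. From there the paper never touches the multivariate Gaussian KL formula: it substitutes the one-step Bayes update $q(\mathbf{f}|S_{t+1})=p(y_{t+1}|f_{t+1},x_{t+1})\,q(\mathbf{f}|S_t)/p(y_{t+1}|x_{t+1})$ directly into the KL integrand, so the $q(\mathbf{f}|S_t)$ factors cancel and the divergence collapses to $\log p(y_{t+1}|x_{t+1})-\Ex{q(f_{t+1}|S_t)}{\log p(y_{t+1}|f_{t+1})}$, i.e., a univariate log marginal likelihood minus an expected Gaussian log-likelihood computed from the first two moments of $f_{t+1}$ under $q(\cdot|S_t)$. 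Your primary route (closed-form Gaussian KL plus rank-one matrix algebra) does go through, and the bookkeeping you worry about is lighter than you expect: in information form the update is exactly $\Sigma_{t+1}^{-1}=\Sigma_t^{-1}+\beta\, e_{t+1}e_{t+1}\T$, which gives $\mathrm{Tr}(\Sigma_{t+1}^{-1}\Sigma_t)-(t+1)=\beta\sigma_t(x_{t+1},x_{t+1})$, the matrix determinant lemma gives $\log(|\Sigma_{t+1}|/|\Sigma_t|)=-\log(1+\beta\sigma_t(x_{t+1},x_{t+1}))$, and the quadratic form reduces to $\frac{\beta\sigma_t(x_{t+1},x_{t+1})}{\sigma_t(x_{t+1},x_{t+1})+\beta^{-1}}(y_{t+1}-\mu_t(x_{t+1}))^2$ as claimed. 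Your secondary ``factorization'' route is also valid, though the clean justification is not whitening but the chain rule for KL with $f(x_{t+1})$ as the leading coordinate: the conditional term vanishes because the likelihood of $y_{t+1}$ depends on $\mathbf{f}$ only through $f(x_{t+1})$, leaving a single univariate Gaussian KL. What the paper's identity buys is that it sidesteps all matrix formulas and extends verbatim to any one-step Bayesian update (the one-step KL is always log evidence minus expected log-likelihood); what your matrix route buys is that it does not rely on the recursive-update factorization and would apply to any pair of Gaussians. All three arguments yield the stated formula.
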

\begin{proof}

From Lemma~\ref{lemma3}, the following equation holds:
\begin{equation}
    \KL{q(f|S_t)}{q(f|S_{t+1})}=\KL{q(\mathbf{f}|S_t)}{q(\mathbf{f}|S_{t+1})},
\end{equation}
where $\mathbf{f}:=(f(x_1),f(x_2),\cdots,f(x_{t+1}))$.
When $S_{t+1}=(X_{t+1},Y_{t+1})$ is observed, $q(\mathbf{f}|S_{t+1})$ can be described as follows: 
\begin{align}
    q(\mathbf{f}|S_{t+1}) &= \frac{p(Y_{t+1}|\mathbf{f},X_{t+1})p(\mathbf{f})}{ p(Y_{t+1}|X_{t+1})} \nonumber \\
    &= \frac{p(y_{t+1}|\mathbf{f},x_{t+1})p(Y_t|\mathbf{f},X_t)p(\mathbf{f})}{\int p(y_{t+1}|\mathbf{f}',x_{t+1})p(Y_t|\mathbf{f}',X_t)p(\mathbf{f}') d\mathbf{f}'} \nonumber \\
    &= \frac{p(y_{t+1}|\mathbf{f},x_{t+1})p(Y_t|X_t)q(\mathbf{f}|S_t)}{\int p(y_{t+1}|\mathbf{f}',x_{t+1})p(Y_t|X_t)q(\mathbf{f}'|S_t) d\mathbf{f}'} \nonumber \\
    &= \frac{p(y_{t+1}|\mathbf{f},x_{t+1})q(\mathbf{f}|S_t)}{p(y_{t+1}|x_{t+1})}.
\end{align}
From this equation, $\KL{q(\mathbf{f}|S_{t-1})}{q(\mathbf{f}|S_t)}$ can be rewritten as follows:
\begin{align}
    &\KL{q(\mathbf{f}|S_t)}{q(\mathbf{f}|S_{t+1})} \nonumber \\
    =& \Ex{q(\mathbf{f}|S_t)}{\log\frac{q(\mathbf{f}|S_t)p(y_{t+1}|x_{t+1})}{p(y_{t+1}|\mathbf{f},x_{t+1})q(\mathbf{f}|S_t)}} \nonumber \\
    =& \log{p(y_{t+1}|x_{t+1})}-\Ex{q(\mathbf{f}|S_t)}{\log{p(y_{t+1}|\mathbf{f},x_{t+1})}} \nonumber \\
    =& \log{\int p(y_{t+1}|f_{t+1})q(f_{t+1}|S_t) df_{t+1}} \nonumber \\
    &-\int q(f_{t+1}|S_t)\log{p(y_{t+1}|f_{t+1})}df_{t+1},
    \label{eq_kl_expansion}
\end{align}
where $f_{t+1}:=f(x_{t+1})$. The first term of Eq.~\eqref{eq_kl_expansion} becomes logarithm of a normal distribution since $p(y_{t+1}|f_{t+1})$ and $q(f_{t+1}|S_t)$ are normal distributions. Specifically, from $p(y_{t+1}|f_{t+1})=\mathcal{N}(y_{t+1}|f_{t+1},\beta^{-1})$ and $p(f_{t+1}|S_t)=\mathcal{N}(f_{t+1}|\mu_t(x_{t+1}),\sigma_t(x_{t+1},x_{t+1}))$, the following equation holds:
\begin{align}
    &\log{\int p(y_{t+1}|f_{t+1})q(f_{t+1}|S_t) df_{t+1}} \nonumber \\ =&\log{\mathcal{N}(y_{t+1}|\mu_t(x_{t+1}),\sigma_t(x_{t+1},x_{t+1})+\beta^{-1})}
\end{align}

The second term can be rewritten as follows:
\begin{align}
    &-\int q(f_{t+1}|S_t)\log{p(y_{t+1}|f_{t+1})}df_{t+1} \nonumber \\ 
    =& \Ex{q(f_{t+1}|S_t)}{\frac{\beta}{2}(y_{t+1}-f_{t+1})^2}+\frac{1}{2}\log{2\pi\beta^{-1}} \nonumber \\
    =& {\frac{\beta}{2}\left(y^2_{t+1}-2y_{t+1}\mathbb{E}[f_{t+1}]+\mathbb{E}[f^2_{t+1}]\right)}+\frac{1}{2}\log{2\pi\beta^{-1}} \nonumber \\
    =& \frac{\beta}{2}(y_{t+1}-\mu_t(x_{t+1}))^2+\frac{\beta}{2}\sigma_t(x_{t+1},x_{t+1}) \nonumber \\
    &+\frac{1}{2}\log{2\pi\beta^{-1}}
\end{align}
From the above, the lemma is derived as follows:
\begin{align}
    &\KL{q(f|S_t)}{q(f|S_{t+1})} \nonumber \\
    =&-\frac{(y_{t+1}-\mu_t(x_{t+1}))^2}{2(\sigma_t(x_{t+1},x_{t+1})+\beta^{-1})} \nonumber\\
    &-\frac{1}{2}\log{2\pi(\sigma_t(x_{t+1},x_{t+1})+\beta^{-1})} \nonumber \\
    &+\frac{\beta}{2}(y_{t+1}-\mu_t(x_{t+1}))^2+\frac{\beta}{2}\sigma_t(x_{t+1},x_{t+1}) \nonumber \\
    &+\frac{1}{2}\log{2\pi\beta^{-1}} \nonumber \\
    =&\frac{1}{2}\beta \sigma_t(x_{t+1},x_{t+1})-\frac{1}{2}\log{(1+\beta \sigma_t(x_{t+1},x_{t+1}))} \nonumber \\
    &+\frac{1}{2}\frac{\beta \sigma_t(x_{t+1},x_{t+1})}{\sigma_t(x_{t+1},x_{t+1})+\beta^{-1}}(y_{t+1}-\mu_t(x_{t+1}))^2.
\end{align}
\end{proof}

\section{Tightness of the proposed upper bound}

\begin{figure}[!t]
    \centering
    \includegraphics[width=7cm]{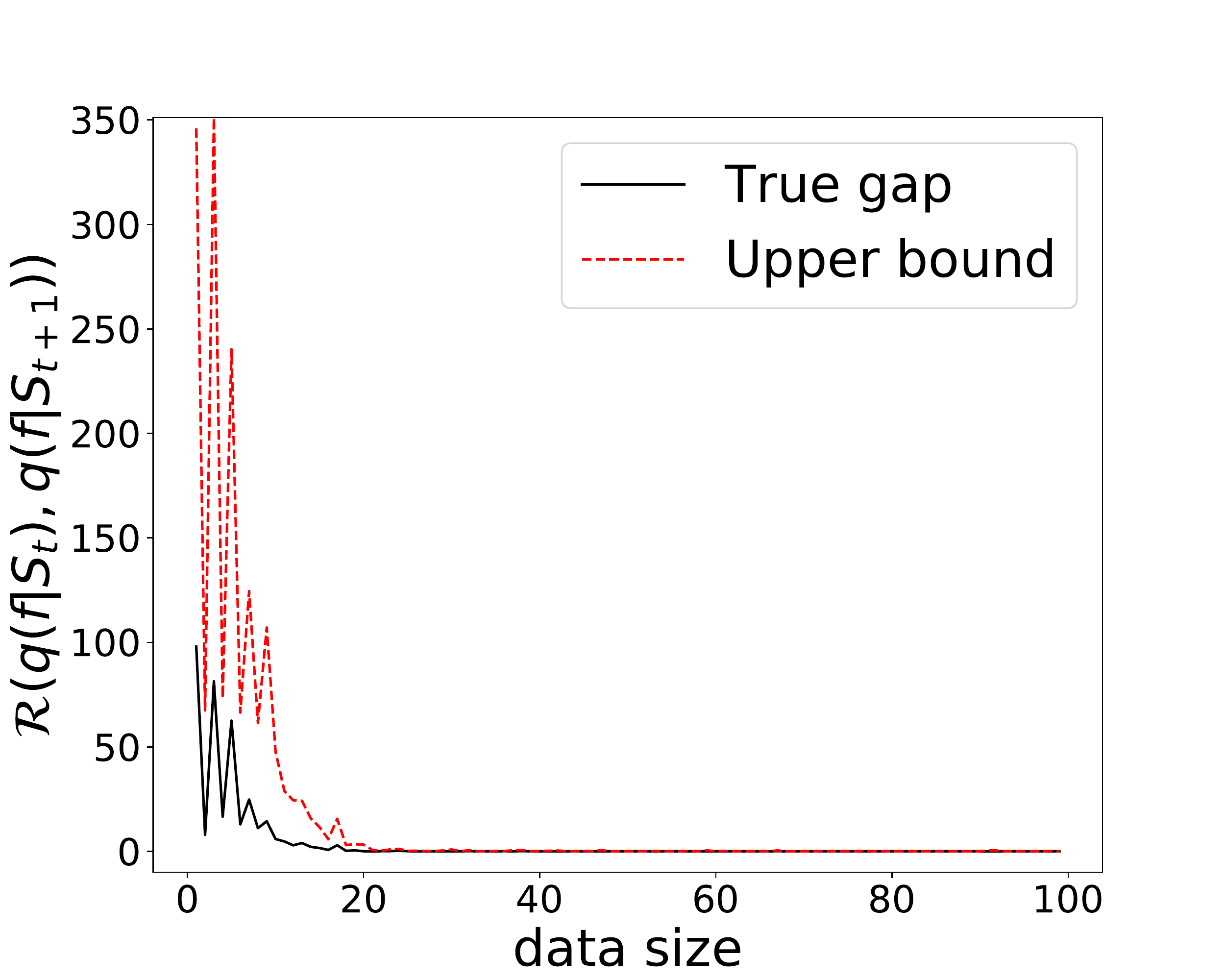} \\
    (a) regression \\
    \includegraphics[width=7cm]{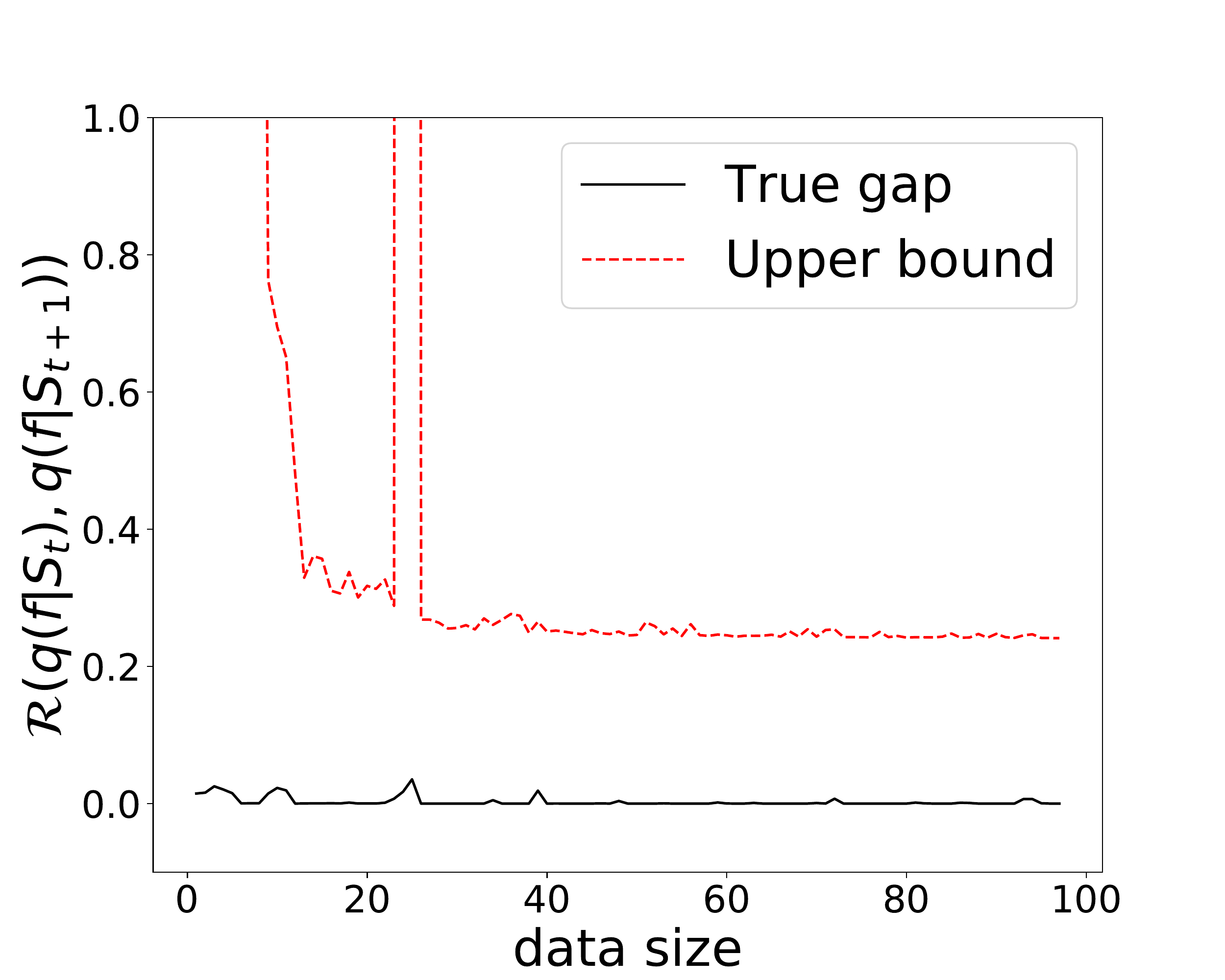} \\
    (b) classification \\
    \caption{Gap between the expected generalization errors, and its upper bound for (a) regression and (b) classification settings.}
    \label{tightness_of_upper_bound}
\end{figure}

We experimentally evaluated the tightness of the proposed upper bound of a gap between expected generalization errors before and after adding a new sample by comparing to the true gap approximated by using a large amount of test data. 

For the regression task, we used the artificial data used in the experiment of Section~5, while, for the classification task, we used the generated data $y_i ={\rm sgn}(\sin(2 \pi x_i))$, where ${\rm sgn(\cdot)}$ is the sign function. The kernel function and its hyperparameter are determined in the same manner explained in Section~5.

Figures~\ref{tightness_of_upper_bound} (a) and ~\ref{tightness_of_upper_bound} (b) show the gaps between the expected generalization errors and their upper bounds for (a) regression and (b) classification tasks. We see that (1) increasing the data size leads to a tight upper bound in both cases of regression and classification, and the KL-divergence term converges to zero. Moreover, (2) the bound could be trivial when the KL divergence takes a large value ($>1$), particularly in classification setting. Also, as the KL-divergence is always non-negative, when the gap of the expected generalization error is negative, the bound is meaningless. As can be seen from Fig.~\ref{tightness_of_upper_bound} (a), the bound works well in a regression setting and the offers reasonable tightness. 

\end{document}